\colorlet{darkgreen}{green!45!black}
\newcommand{\xmark}{\ding{55}}
\newcommand{\vmark}{\textcolor{green!50!black}{\ding{51}}}
\theoremstyle{plain}
\declaretheorem{theorem}
\declaretheorem[sibling=theorem, numberwithin=section]{proposition}
\declaretheorem[sibling=theorem, numberwithin=section]{lemma}
\declaretheorem[sibling=theorem, numberwithin=section]{corollary}
\theoremstyle{definition}
\declaretheorem[sibling=theorem, numberwithin=section]{definition}
\declaretheorem[sibling=theorem, numberwithin=section]{assumption}
\theoremstyle{remark}
\newcommand{\defeq}{\overset{\text{\tiny def}}{=}}
\newcommand\indep{\protect\mathpalette{\protect\indepT}{\perp}}
\def\indepT#1#2{\mathrel{\rlap{$#1#2$}\mkern2mu{#1#2}}}
\newcommand\rurl[1]{%
  \href{http://#1}{\nolinkurl{#1}}%
}
\DeclareMathOperator*{\argmax}{arg\,max}
\DeclareMathOperator*{\argmin}{arg\,min}
\newcommand{\highlight}[2]{\colorbox{#1!17}{$\displaystyle #2$}}
\renewcommand{\highlight}[2]{\colorbox{#1!17}{#2}}
\newtcolorbox{CatchyBox}[2][]{
    lower separated=false,
    colback=white,
    colframe=white, fonttitle=\bfseries,
    colbacktitle=white,
    coltitle=black,
    title=#2,#1}
\begin{document}

\twocolumn[
\aistatstitle{Survival Models: Proper Scoring Rule \\ and Stochastic Optimization with Competing Risks}

\aistatsauthor{Julie Alberge$^1$, Vincent Maladière$^2$, Olivier Grisel$^2$, Judith Abécassis$^1$, Gaël Varoquaux$^1$}

\aistatsaddress{$^1$ SODA Team, Inria Saclay, Palaiseau France \\
$^2$ :probabl., Paris France\\
\texttt{julie.alberge@inria.fr}, \texttt{vincent@probabl.ai}
} ]

\begin{abstract}

When dealing with right-censored data, where some outcomes are missing due to a limited observation period, survival analysis —known as \textit{time-to-event analysis}— focuses on predicting the time until an event of interest occurs. Multiple classes of outcomes lead to a classification variant: predicting the most likely event, a less explored area known as \textit{competing risks}. Classic competing risks models couple architecture and loss, limiting scalability.

To address these issues, we design a strictly proper censoring-adjusted separable scoring rule, allowing optimization on a subset of the data as each observation is evaluated independently. The loss estimates outcome probabilities and enables stochastic optimization for competing risks, which we use for efficient gradient boosting trees. \textbf{\textsc{SurvivalBoost}} not only outperforms 12 state-of-the-art models across several metrics on 4 real-life datasets, both in competing risks and survival settings, but also provides great calibration, the ability to predict across any time horizon, and computation times faster than existing methods.

\end{abstract}

\section{INTRODUCTION}

We all die at some point. Some applications call for predicting not \emph{if} but \emph{when} an event of interest is likely to occur. In such a setting of \emph{time-to-event regression}, samples often have unobserved outcomes, \emph{e.g.} individuals that have not been followed long enough for the event of interest to occur. Limiting the analysis to fully observed samples creates a censoring bias. To address this, \emph{survival analysis} models use dedicated corrections for censorship. These have long been central to health applications \citep{zhu_deep_2016, chaddad_radiomic_2016, gaynor_use_1993}. Nowadays, survival analysis is also used in diverse fields, such as predictive maintenance \citep{rith_analysis_2018, susto_machine_2015}, or user-engagement studies \citep{maystre_temporally-consistent_nodate}. 
Survival analysis has led to many dedicated models, such as the \citet{kaplan_nonparametric_1958} estimator or the \citet{cox_regression_1972} proportional hazard model. 

Competing risks analysis generalizes survival analysis to multiple events, determining which will happen first \citep{susto_machine_2015, gaynor_use_1993}. For instance, if a breast-cancer patient dies from a different cause, it is impossible to determine when they would have succumbed to cancer, regardless of the duration of the observation period. The caregiver may also want to adapt the treatment if it is predicted that the patient will die of a competing event, such as a heart attack, sooner than from cancer.
As the risks of the various events are seldom independent--for example, cancer and cardiovascular disease share inflammation or age risk factors \citep{koene2016shared}--competing risks cannot be solved by running a survival model for each event \citep{wolbers2009prognostic}. The estimated risk of an event of interest will be biased if the competing risks are not included. Hence, adequate models for these risks are critical for decision-making~\citep{ramspek2022lessons,koller2012competing,van2016competing}.

Survival models have traditionally been developed with \emph{ad hoc} adjustments for censoring. The most common approach is to design a likelihood using the probability of censoring per unit time--\emph{i.e.} the time-derivative of the risk--which either comes with strong parametric assumptions \citep{cox_regression_1972} or \emph{ad hoc} corrections \citep{wang_survtrace_2022}. Given that the risk, which is the probability of the outcome at a specific time, is crucial for various applications, it is preferable to use proper scoring rules, that directly control probabilities, as developed by \citet{graf_assessment_1999,rindt_survival_2022}. However, no metric (or loss) has been shown to control probabilities in the competing risks setting.

In application domains typical of survival analysis and competing risks --health, predictive maintenance, insurance, marketing-- the data are mostly tabular with categorical variables, where tree-based models shine \citep{grinsztajn_why_2022}.  Existing survival and competing risks models do not fit well with these requirements. In particular, the proper scoring rule in \citet{rindt_survival_2022} requires a time derivative of the risk, typically via an auto-diff operator in a neural architecture. This approach is challenging to adapt to tree-based algorithms. In addition, the ever-growing volume of data calls for computationally efficient algorithms.

\paragraph{Contributions}
Here, we provide a general theoretical framework for learning a competing risks algorithm using a strictly proper scoring rule. This scoring rule yields a loss function easy to plug into any multiclass estimator to create a competing risks algorithm, providing the individual risk of each event at any given horizon.
An interesting property of this new loss is that it can be optimized on a subset of the training data. Hence, it allows stochastic optimization, enabling computationally efficient learning. \\
With that, we propose an algorithm called \textsc{SurvivalBoost}, based on Stochastic Gradient Boosting Trees. 
We benchmark our algorithm on a synthetic dataset and 4 real-world datasets - both in the competing risks and the survival analysis setting - with several ranking and calibration metrics and show that it outperforms 12 state-of-the-art (SOTA) baselines in both settings.

\section{RELATED WORK}
\paragraph{Survival settings}
Various survival models have been developed, ranging from approaches like the \citet{kaplan_nonparametric_1958} estimator, which estimates the general survival curve for an entire population, to models that account for covariates. The \citet{cox_regression_1972} Proportional Hazards Model, a linear model of the \emph{hazards}, which represents the instantaneous probability of an event, \emph{i.e.}, the logarithmic derivative of outcome probabilities over time. More complex models have been adapted to the survival setting: Support Vector Machines \citep{van_belle_support_2011}, survival games \citep{han2021inverse} and Neural networks with DeepSurv \citep{katzman_deepsurv_2018} or PCHazard \citep{kvamme_continuous_2019}. While these models do not control risks,  more recent neural networks employ appropriate loss functions: DQS \citep[though relying on a piecewise constant hazard]{yanagisawa2023proper}, SumoNet \citep{rindt_survival_2022} which requires differentiable models.

\paragraph{Competing risks}
Competing risks, involving multiple possible outcomes, require new methods that can naturally adapt to the simpler survival analysis setting.
Derived from the \citet{kaplan_nonparametric_1958} estimator, the \citet{nelson_theory_1972}-\citet{aalen_survival_2008} estimator is an unbiased marginal model for competing risks. \\
The linear \citet{fine_proportional_1999} estimator, inspired by the \citet{cox_regression_1972} estimator in survival analysis, is the most popular model in clinical research.
Recently, machine learning models have been adapted to competing risks settings, including tree-based approaches such as the Random Survival Forests \citep{ishwaran_random_2008, kretowska_tree-based_2018, bellot_tree-based_2018}, boosting approaches \citep{bellot_multitask_2018}, and neural networks approaches such as DeepHit and Gaussian mixtures approaches \citep{lee_deephit_2018, aala_deep_2017, danks_derivative-based_2022, nagpal2021deep}. Tranformer-based approaches with SurvTRACE \citep{wang_survtrace_2022} using a loss corrected to predict rare competing events, independently forecasts all events but do not ensure that probabilities sum to one.\\
For a comprehensive review of competing risks models, refer to \citet{monterrubio-gomez_review_2022}.

\paragraph{Evaluation for such models}
Prediction evaluation in survival or competing risks settings requires adapted metrics to account for right-censored data points \citep{harrell}, such as the C-index, which is an adaptation of the Area Under the ROC Curve (AUC) used in classification tasks. However, the C-index only evaluates the ranking of samples, \emph{i.e.} which samples are likely to experience the event of interest first. It is also dependent on the censoring distribution, which can introduce bias in the evaluation~\citep{blanche_c-index_2019}. In fact, the score may be inflated for distributions that differ from the oracle-censoring distribution\cite{rindt_survival_2022}. Alternative methods have been proposed, such as the \emph{time-dependent} C-index, $C_\zeta$ \citep{antolini_timedependent_2005}, which is the same metric but computed at a specific time horizon $\zeta$. The C-index ranking metric has also been extended to competing risks~\citep{uno_cstatistics_2011}, but, as in the survival setting, it only evaluates relative risks between pairs of individuals and does not assess the absolute risk for a given individual. 
Other time-dependent adaptations of the ROC curve have been developed, though these also measure discriminative power rather than the actual risks or probabilities ~\citep{blanche2013estimating}.
Yet, controlling risk is crucial for decision making \citep{van2019calibration}. Proper scoring rules offer an alternative to overcome the limitations of existing metrics, as they capture more aspects of the problem. Additionally, they can be used for both the training and evaluating probabilistic predictive models.

\paragraph{Proper Scoring Rules (PSR)}
Scoring rules are cost functions of observations and a candidate probability distribution. When \emph{proper}, they target the oracle probability distribution (Definition \ref{def:proper}).
Crucially, they give machine-learning losses that recover probabilities of outcomes.
For classification, where discrete events are observed rather than probabilities, the Brier score and the log loss give proper scoring rules, with relative merits \citep{benedetti_scoring_2010,merkle_choosing_2013}. 

\citet{graf_assessment_1999} adapt the Brier score to survival analysis, with a strong assumption of independence of the covariates in the censoring distribution. Yet, this assumption is often violated \citep{kvamme_brier_2019}, leading to bias \citep{rindt_survival_2022}.
\citet{rindt_survival_2022} show that the likelihood of the survival function yields a proper scoring rule, but requires both the density function and the survival function, which is a time-wise derivative of outcome probabilities (Definition \ref{def:proper}).  For quantile regression, \citet{yanagisawa2023proper} adapt the Pinball loss to a proper scoring rule for survival analysis, but requiring an oracle parameter.
\citet{han2021inverse} introduce a double optimization problem, where the stationary point corresponds to the oracle distributions.

For competing risks, \citet{schoop_quantifying_2011} extend the Brier score to a proper scoring rule. However, the Brier score does not capture the uncertainty as effectively as the log loss \citep{benedetti_scoring_2010}. 

\section{PROBLEM FORMULATION}
\paragraph{Notations}
We write oracle quantities as $a^*$ and estimates as $\hat{a}$, vectors in bold, $\mathbf{a}$, random variables in upper case, $A$, observations in lower cases $a$, and distributions in calligraphic style $\mathcal{A}$.
\subsection{Problem Setting}

We consider $K \in \mathbb{N}^*$ competing events. For $k \in \llbracket 1, K \rrbracket$, we denote $T^*_k \in \mathbb{R}_+$ the event time of the event $k$, depending on the covariates $\mathbf{X} \sim \mathcal{X}$. We also denote $T^* \in \mathbb{R}_+$, the first event of interest that occurs, $T^* = \min\limits_{k \in \llbracket 1, K \rrbracket}(T^*_k)$.
We observe $(\mathbf{X}, T, \Delta) \sim \mathcal{D}$, with $T = \min(T^*, C)$ where $C \in \mathbb{R}_+$ is the censoring time, which can depend on $\mathbf{X}$, and $\Delta \in \llbracket0, K \rrbracket, \Delta= \argmin\limits_{k \in \llbracket 0, K \rrbracket}(T^*_k) $, where 0 denotes a censored observation. 
However, we are primarily interested in the distribution of the uncensored data, $(\mathbf{X}, T^*, \Delta) \sim \mathcal{D}^*$, particularly the joint distribution of $T^*, \Delta|\mathbf{X} =\mathbf{x}$. \\
Given a data set of $n$ individuals, we denote each individual $i$ by its associated covariates $\textbf{x}_i$. The outcome is represented by $(t_i, \delta_i)$, where $t_i$ is the observed time, and $\delta_i \in \llbracket0, K\rrbracket $ is the event indicator. $\delta_i = k$ indicates that the event of interest $k$ was observed at time $t_i$, while $\delta_i = 0$ indicates that the observation was censored at time $t_i$. This paper aims to predict an unbiased estimate of all cause-specific Cumulative Incidence functions (CIFs) at any time horizon $\zeta$ (Definition \ref{quantities_interest}).

\begin{definition}[\emph{Quantities of interest}]\label{quantities_interest}~\\[-4.8ex]
\begin{flalign*}%
    \text{\parbox{\linewidth}{\footnotesize{Survival Function to any event}:}}\\[-.2em] 
    S^*(\zeta|\mathbf{x}) = \mathbb{P}(T^* > \zeta |\mathbf{X}= \mathbf{x})\\
    \text{\parbox{\linewidth}{\footnotesize{CIF} \small (Cumulative Incidence
    Function):}}\\[-.1em] F^*(\zeta|\mathbf{x}) = \mathbb{P}(T^* \leq \zeta
|\mathbf{X}= \mathbf{x}) = 1 - S^*(\zeta | \mathbf{X} = \mathbf{x})\\[.1em]
    \text{\parbox{\linewidth}{\footnotesize{CIF of the $k^{th}$ event}:}}\\ F^*_k(\zeta|\mathbf{x}) = \mathbb{P}(T^* \leq \zeta \cap \Delta = k |\mathbf{X}= \mathbf{x})  \\[.1em]
   \text{\parbox{\linewidth}{\footnotesize{Censoring Function}:}}\\[-.5em] G^*(\zeta|\mathbf{x}) = \mathbb{P}(C> \zeta |\mathbf{X}= \mathbf{x})
\end{flalign*}%
\end{definition}%
\vspace{-0.8em}
\begin{assumption}[\emph{Non-informative censoring}]\label{info_censoring}
     We make the classic assumption in survival analysis that censoring is non-informative with respect to covariates: $$ T^* \indep C \, | \, \mathbf{X}$$
\end{assumption}
Assumption~\ref{info_censoring} is essential for most theoretical results in survival analysis \citep{rindt_survival_2022, yanagisawa2023proper, han2021inverse}. It shows that single-event survival analysis becomes invalid in the presence of competing risks: if some observations are censored due to other events that share unobserved risk factors with the event of interest, this assumption is violated.

\subsection{CIF Scoring Rule}
\paragraph{Proper Scoring Rule }
A scoring rule $\ell$ evaluates a distribution $\mathcal{P}$ on an observation $Y$, producing a corresponding score $\ell(\mathcal{P}, Y)$. The higher the score, the better the model fits the observation. For a proper scoring rule, the score reflects the model's ability to predict the oracle distribution
\citep[for more on scoring rules, see][]{gneiting_strictly_2007, ovcharov_proper_2018, merkle_choosing_2013}. 
\begin{definition}[\emph{Proper Scoring Rule}]\label{def:proper}
A scoring rule $\ell$ is considered proper if
$$
\forall \mathcal{P}, \mathcal{Q}, \text{distributions} \quad 
\mathbb{E}_{Y\sim \mathcal{Q}}[\ell(\mathcal{P}, Y)]  \leq \mathbb{E}_{Y\sim \mathcal{Q}}[\ell(\mathcal{Q}, Y)].
$$
If the equality holds if and only if $\mathcal{P} = \mathcal{Q}$, in which case the scoring rule is \emph{strictly proper}.%
\end{definition} 
\paragraph{Proper scoring rule for the Global CIF}
We denote $L_{\zeta}$ a scoring rule for the global CIF at time $\zeta$. 
\begin{definition}[\emph{PSR for competing risks settings}]\label{def:psrcr}
In competing risks settings, where censoring is present, a scoring rule $L_{\zeta}$ for the CIF at time $\zeta$ for an observation $(\mathbf{X}, T, \Delta)$ is proper if and only if: \\
$ \forall \zeta, (\mathbf{X}, T, \Delta )\sim \mathcal{D}, \forall (\hat{F}_1, ..., \hat{F}_K, \hat{S}), $
\begin{equation}
\small
    \begin{aligned}
    \mathbb{E}_{T^*\!, C, \Delta  | \mathbf{X}= \mathbf{x}}
    [L_{\zeta}(
    \tikzmarknode{estimands}{\highlight{pink}
        {$(\hat{F}_1(\zeta| \mathbf{x}), ..., \hat{F}_K(\zeta| \mathbf{x}), \hat{S}(\zeta| \mathbf{x}))$}, (T, \Delta)
        }
    )]  \\
        \leq  \quad \\ 
    \mathbb{E}_{T^*\!, C, \Delta  | \mathbf{X}= \mathbf{x}}[L_{\zeta}(\tikzmarknode{oracle}{\highlight{purple}{$(F^*_1(\zeta| \mathbf{x}), ..., F^*_K(\zeta| \mathbf{x}), S^*(\zeta| \mathbf{x}))$}, (T, \Delta)})] 
    \end{aligned}%
\begin{tikzpicture}[overlay,remember picture,>=stealth,nodes={align=left,inner ysep=1pt},<-]
     \path (estimands.south) ++ (-3em,-0.5em) node[anchor=east,color=pink!200] (estititle){\text{Estimated distributions}};
     \draw [color=pink!200](estimands.south) -- ([xshift=-0.1ex,color=pink!200]estititle.east);
    \path (oracle.south) ++ (-3em,-0.5em) node[anchor=east,color=purple!67] (oracletitle){\text{Oracle distributions}};
     \draw [color=purple!87](oracle.south) -- ([xshift=-0.1ex,color=purple!87]oracletitle.east);
\end{tikzpicture}%
\end{equation}
When equality is achieved \emph{only} for the oracle distributions, the scoring rule is \emph{strictly proper}.
\end{definition}

\section{A STRICTLY PROPER SCORING RULE FOR COMPETING RISKS}

We prove that the negative log-likelihood, re-weighted by the censoring distribution (IPCW: Inverse Probabilities of Censoring Weights), is strictly proper.

\begin{definition}[Competitive Weights Negative LogLoss]
    We introduce the multiclass negative log-likelihood, re-weighted with the censoring distribution. The different classes represent the loss for all the cumulative incidence functions and the survival function.%
    \begin{multline}
    \quad\forall \zeta, (\mathbf{x}, t, \delta) \sim \mathcal{D}, \\ \quad \mathrm{L}_{\zeta}((\hat{F}_1(\zeta| \mathbf{x}), ..., \hat{F}_K(\zeta| \mathbf{x}), \hat{S}(\zeta| \mathbf{x})), (t, \delta)) \defeq \\
    \frac{1}{n} \sum_{i=1}^n \left( \sum_{k=1}^{K} 
    \dfrac{
        \mathbb{1}_{t_i \leq \zeta, \delta_i = k} ~~\log\left(\hat{F}_k(\zeta|\mathbf{x}_i)\right)
        }
        {
        \tikzmarknode{censure_i}{\highlight{orange}
            {$G^*(t_i|\mathbf{x}_i) $}
            }} \right) \\ 
        +
        \dfrac{
        \mathbb{1}_{t_i > \zeta} ~~ \log\left(\hat{S}(\zeta|\mathbf{x}_i)\right)
        }
        {
        \tikzmarknode{censure_t}{\highlight{cyan}
            {$G^*(\zeta|\mathbf{x}_i) $}
        }}
    \label{eqn:full_loss}
\end{multline}
\begin{tikzpicture}[overlay,remember picture,>=stealth,nodes={align=left,inner ysep=1pt},<-]
     \path (censure_i.south) ++ (-3em, -2em) node[anchor=east,color=orange!67] (titlecensi){\text{\parbox{25ex}{Probability of remaining censor-free at $t_i$}}};
     \draw [color=orange!87](censure_i.west) -- ([xshift=-0.1ex,color=orange]titlecensi.north east);
    \path (censure_t.south) ++ (-3em,-1em) node[anchor=east,color=cyan!67] (censi){\text{\parbox{30ex}{Probability  of remaining censor-free at $\zeta$ \rlap{\small\quad (1 - probability of  censoring)}}}};
     \draw [color=cyan!87](censure_t.west) -- ([xshift=-0.1ex,color=cyan]censi.north east);
\end{tikzpicture}
\end{definition}
\vspace{.8em}

Eq.\ref{eqn:full_loss} is a standard log-loss (also known as cross-entropy), reweighted by appropriate sample weights —the inverse probabilities, or IPCW. Therefore, it can easily be added to most multiclass estimators.
\smallskip

\begin{restatable}{lemma}{usefullemma}
\label{lem:usefulllemma}%
Accounting for the time horizon $\zeta$, the expectation of the above scoring rule can be written as: $\quad\forall \zeta, (\mathbf{X}, T, \Delta) \sim \mathcal{D}, $
\begin{equation}
\small
\begin{aligned}
    \mathbb{E}_{T^*\!, C, \Delta |\mathbf{X} =\mathbf{x}}\!\left[\mathrm{L}_{\zeta}\!\left((\hat{F}_1(\zeta| \mathbf{x}), ..., \hat{F}_K(\zeta| \mathbf{x}), \hat{S}(\zeta| \mathbf{x})), (T, \Delta)\right)\right] \\
    = \sum_{k=1}^K \log\left(\hat{F}_k(\zeta|\mathbf{x})\right)
        F^*_k(\zeta |\mathbf{x}) 
     + \log\left(\hat{S}(\zeta|\mathbf{x})\right)
        S^*(\zeta| \mathbf{x})
        \label{eqn:loss}
\end{aligned}
\end{equation}
\end{restatable}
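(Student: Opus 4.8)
The plan is to exploit linearity of expectation, treating the estimated quantities $\hat{F}_k(\zeta|\mathbf{x})$ and $\hat{S}(\zeta|\mathbf{x})$ as constants under the conditional expectation $\mathbb{E}_{T^*\!,C,\Delta|\mathbf{X}=\mathbf{x}}$, since they depend only on $\mathbf{x}$ and $\zeta$. Pulling the logarithms outside the expectation, the lemma reduces to two moment identities: that $\mathbb{E}[\mathbb{1}_{T\le\zeta,\Delta=k}/G^*(T|\mathbf{x})] = F^*_k(\zeta|\mathbf{x})$ for each $k\in\llbracket 1,K\rrbracket$, and that $\mathbb{E}[\mathbb{1}_{T>\zeta}]/G^*(\zeta|\mathbf{x}) = S^*(\zeta|\mathbf{x})$. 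The whole argument is driven by Assumption~\ref{info_censoring}: the inverse-censoring weight is designed to cancel exactly the censoring probability that biases each indicator.

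For the CIF terms, I would first observe that $\{\Delta=k\}$ with $k\ge 1$ forces event $k$ to occur first \emph{and} before censoring, so on this event $T=T^*$ and the denominator equals $G^*(T^*|\mathbf{x})$. I would then condition on $(T^*,\Delta)$ and integrate out $C$: by the independence $T^*\indep C\,|\,\mathbf{X}$, the conditional probability that censoring has not yet occurred is $\mathbb{P}(C>T^*\mid T^*,\mathbf{x}) = G^*(T^*|\mathbf{x})$, which cancels the weight precisely. What remains is $\mathbb{E}_{T^*\!,\Delta|\mathbf{x}}[\mathbb{1}_{T^*\le\zeta,\Delta=k}] = \mathbb{P}(T^*\le\zeta,\Delta=k\mid\mathbf{x}) = F^*_k(\zeta|\mathbf{x})$, as required.

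For the survival term, the event $\{T>\zeta\}$ is exactly $\{T^*>\zeta\}\cap\{C>\zeta\}$, and again invoking Assumption~\ref{info_censoring} factorizes $\mathbb{P}(T>\zeta\mid\mathbf{x}) = S^*(\zeta|\mathbf{x})\,G^*(\zeta|\mathbf{x})$. Dividing by the deterministic weight $G^*(\zeta|\mathbf{x})$ leaves $S^*(\zeta|\mathbf{x})$. Summing the $K$ CIF contributions with the survival contribution then yields the claimed identity.

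I expect the main obstacle to be the bookkeeping around the censoring indicator in the CIF terms: one must track that $\{T\le\zeta,\Delta=k\}$ simultaneously encodes "$k$ is the first event", "$k$ precedes censoring", and "$k$ occurs by time $\zeta$", and that substituting $T^*$ for $T$ inside the weight is legitimate precisely because $\Delta=k$ excludes censoring before the event. Once the event is decomposed this way, the cancellation of $G^*$ through the independence assumption is the crux of the argument; the remaining steps are routine manipulations of conditional expectations.
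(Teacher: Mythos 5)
Your proposal is correct and follows essentially the same route as the paper's proof: decompose the loss into the $K$ CIF terms and the survival term, pull the logarithms out of the conditional expectation, use $\{\Delta=k\}\Rightarrow T=T^*$ to rewrite the weight as $G^*(T^*|\mathbf{x})$, and invoke $T^*\indep C\,|\,\mathbf{X}$ to integrate out $C$ so that the censoring probability cancels the IPCW weight (the paper writes this cancellation as an explicit double integral where you phrase it via the tower property, but the computation is identical). The survival-term argument, factorizing $\mathbb{P}(T>\zeta|\mathbf{x})=S^*(\zeta|\mathbf{x})G^*(\zeta|\mathbf{x})$ and dividing by the deterministic weight, also matches the paper exactly.
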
%
\begin{proof}[Proof sketch]
    The weights allow us to transition from the observed distribution $T$ to the uncensored distribution $T^*$, which is crucial for demonstrating properness. The full proof can be found in Appendix \ref{prooflemma}.
\end{proof}%
\begin{restatable}[Properness of the scoring rule]{theorem}{bigthm}\label{thm:bigtheorem}
    Under the assumption that the weights are appropriately chosen, 
    $L_{\zeta}:  \mathbb{R}^{K+1} \times \mathcal{D} \rightarrow \mathbb{R}$ is a strictly proper scoring rule for the global CIF on a fixed time horizon $\zeta \in \mathbb{R}_+$.
\end{restatable}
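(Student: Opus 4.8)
The plan is to reduce the theorem to the classical Gibbs inequality, once the expected score has been put into the cross-entropy form supplied by Lemma~\ref{lem:usefulllemma}. First I would invoke that lemma, which — precisely because the IPCW weights are ``appropriately chosen'', i.e. equal to the true censoring survival $G^*$ — rewrites the expected score as
$$
\mathbb{E}\!\left[L_\zeta\right] = \sum_{k=1}^K F^*_k(\zeta|\mathbf{x})\log\hat{F}_k(\zeta|\mathbf{x}) + S^*(\zeta|\mathbf{x})\log\hat{S}(\zeta|\mathbf{x}).
$$
Then I would record the key structural fact that makes this a well-posed categorical problem: the events $\{T^*\le\zeta,\Delta=k\}$ for $k=1,\dots,K$ together with $\{T^*>\zeta\}$ form a partition, so the oracle vector $\mathbf{p}^*:=(F^*_1(\zeta|\mathbf{x}),\dots,F^*_K(\zeta|\mathbf{x}),S^*(\zeta|\mathbf{x}))$ is a genuine probability vector on $K+1$ categories summing to $1$. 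Restricting attention (as any multiclass/softmax output does) to candidate vectors $\hat{\mathbf{p}}:=(\hat{F}_1,\dots,\hat{F}_K,\hat{S})$ that likewise sum to $1$, the expected score is exactly the negative cross-entropy $\sum_j p^*_j\log\hat{p}_j$.

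Next I would establish the inequality together with its equality case in one stroke, using $\log x\le x-1$ (equality iff $x=1$):
$$
\sum_{j:\,p^*_j>0} p^*_j\log\frac{\hat{p}_j}{p^*_j} \;\le\; \sum_{j:\,p^*_j>0}\bigl(\hat{p}_j-p^*_j\bigr) \;=\; \sum_{j:\,p^*_j>0}\hat{p}_j-1 \;\le\; 0,
$$
which rearranges to $\sum_j p^*_j\log\hat{p}_j\le\sum_j p^*_j\log p^*_j$, i.e. $\mathbb{E}[L_\zeta(\hat{\mathbf{p}})]\le\mathbb{E}[L_\zeta(\mathbf{p}^*)]$. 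Substituting back $\mathbf{p}^*=(F^*_1,\dots,F^*_K,S^*)$ reproduces verbatim the oracle right-hand side of Definition~\ref{def:psrcr}, establishing properness. For strictness I would trace the two inequalities: equality in the first forces $\hat{p}_j=p^*_j$ on every coordinate with $p^*_j>0$, while equality in the second forces $\sum_{j:\,p^*_j>0}\hat{p}_j=1$, hence $\hat{p}_j=0$ wherever $p^*_j=0$; together these yield $\hat{\mathbf{p}}=\mathbf{p}^*$, so equality holds \emph{only} for the oracle tuple.

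I expect the main obstacle to be not the inequality itself but the bookkeeping that makes it valid and, especially, strict. One must genuinely use the normalization $\sum_k\hat{F}_k+\hat{S}=1$: without it the supremum of $\sum_j p^*_j\log\hat{p}_j$ is $+\infty$ (just rescale all $\hat{p}_j$ upward), so properness would fail outright. One must also handle the coordinates with $p^*_j=0$, where the $0\log 0 =0$ convention annihilates the term and the pointwise bound $\log x\le x-1$ says nothing; strictness on those coordinates is recovered \emph{only} from the normalization constraint, as shown above. Everything concerning why the weights being correct ($G^*$) licenses the cross-entropy identity is inherited from Lemma~\ref{lem:usefulllemma} and need not be reproved here, so the theorem's proof is essentially this reduction plus a careful Gibbs argument.
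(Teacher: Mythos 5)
Your proposal is correct, and it shares the paper's overall architecture — both invoke Lemma~\ref{lem:usefulllemma} to reduce the expected score to the cross-entropy $\sum_{k} p^*_k \log \hat{p}_k$ over the $(K{+}1)$-simplex — but the final step differs. The paper sets this up as a constrained convex program ($\max \sum_k p^*_k\log\hat p_k$ subject to $\sum_k \hat p_k = 1$, $\hat p_k\ge 0$) and solves it via the Karush--Kuhn--Tucker conditions, showing $\mu_k=0$, $\lambda=1$, and hence that the unique stationary point is $\hat p_k = p^*_k$; you instead apply Gibbs' inequality directly via $\log x \le x-1$ and trace the equality cases. Your route is more elementary (no constraint qualification or multipliers needed) and is arguably tighter on two points the paper glosses over: the coordinates with $p^*_j=0$, where the paper's argument that every $\mu_k$ vanishes (``$\hat p_k=0$ forces the objective to $+\infty$'') only applies when $p^*_k>0$, and the strictness of the equality case, which you obtain explicitly from the normalization rather than from uniqueness of the KKT point. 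What the paper's formulation buys in exchange is the explicit optimization-problem framing, which makes the role of the simplex constraint and the convexity of the negative log-likelihood visible as such. Both proofs, like yours, genuinely require the candidates to satisfy $\sum_k\hat F_k+\hat S=1$ (the paper imposes it as a constraint, you flag it as essential); this is consistent with Definition~\ref{def:psrcr}, where the candidates are distributions.
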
%
\begin{proof}[Proof sketch]
    Using the previous result, the properties of the negative log-likelihood, and Definition \ref{def:psrcr}, we conclude that the loss is strictly proper. Full proof in Appendix \ref{psrweights}.
\end{proof}

\section{\textsc{SurvivalBoost}: GRADIENT BOOSTING COMPETING RISKS}

\begin{figure*}[t]
\centerline{\includegraphics[width=\textwidth]{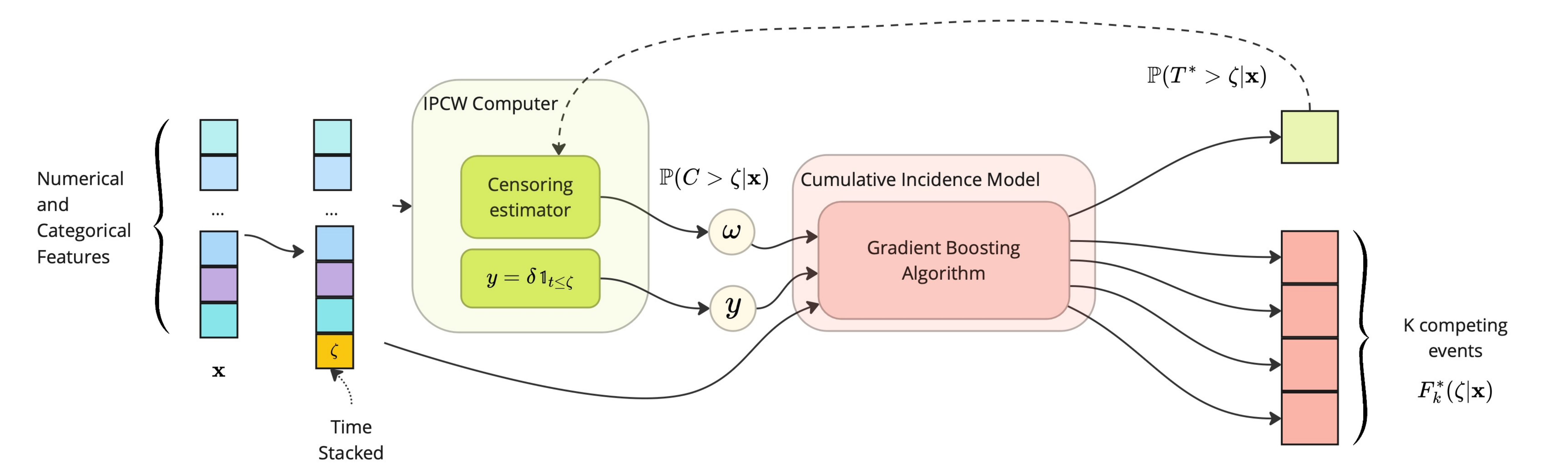}}%
\caption{\textbf{\textsc{SurvivalBoost} Algorithm with its Feedback Loop.} After providing input to the algorithm, a random time is assigned, and the corresponding weights and target are computed. After each iteration, the feedback loop updates the censoring probability, $G^\star$ as defined in eq.\ref{eqn:full_loss}.}
\label{architecture}
\end{figure*}

While eq.\ref{eqn:full_loss} can be used as a loss in any multiclass machine learning algorithm, we choose Gradient Boosting Trees due to their strong performance on tabular data \citep{grinsztajn_why_2022} and their compatibility with stochastic optimization. 
Gradient boosting methods approximate complex functions by combining weak learners (or base learners). At each iteration $m$, the algorithm focuses on the residuals of the loss function and builds a base learner $h_m$ that minimizes these residuals. For gradient boosting trees, the estimator typically takes the form $H_m(x) = H_{m-1}(x) + \nu h_m(x)$ where $\nu$ represents a chosen learning rate. For more on gradient boosting, refer to \citet{friedman}.

Most survival or competing risk loss functions cannot be used with tree-based models, as they require time derivates and thus smoothness. To address this, we introduce an algorithm called \textsc{SurvivalBoost}, which predicts all CIFs for each competing event as well as the global survival function. By predicting these jointly, we ensure that the stability of the probabilities is maintained, as the outputs of the classification models naturally sum to one. This ensures that $\mathbb{P}(T^* \leq \zeta|\mathbf{X}=\mathbf{x}) + \mathbb{P}(T^*>\zeta|\mathbf{X}=\mathbf{x}) = 1$, meaning the model's outputs are consistent and sum to one:
\begin{equation*}
\small 
    \sum_{k=1}^K \underbrace{\mathbb{P}(T^* \leq \zeta \cap \Delta^* = k|\mathbf{X}=\mathbf{x})}_{k^{th} \text{CIF}}+\underbrace{\mathbb{P}(T^*>\zeta|\mathbf{X}=\mathbf{x})}_{\text{Survival Probability}} = 1
\end{equation*}
Using the loss in eq.\ref{eqn:loss}, we can directly predict the CIF instead of predicting the hazard function (the derivative of the CIF), as is often done —for example, in DeepHit \citep{lee_deephit_2018} or SurvTRACE \citep{wang_survtrace_2022}. This approach allows us to drop the constant-hazard assumption present in \citep[][]{yanagisawa2023proper, kvamme_continuous_2019, wang_survtrace_2022, rindt_survival_2022}.

\begin{algorithm}[t]
       \caption{\textsc{SurvivalBoost} Algorithm - $m^{th}$ Iteration}
       \label{alg:incidence_one_iter}
    \begin{algorithmic}
       \STATE {\bfseries Input:} $\mathbf{x}, \delta, t$
       \FOR{$i=1$ {\bfseries to} $n_{samples}$}
       \STATE $\zeta_i \sim \mathcal{U}(0, t_{max})$ 
       \ENDFOR
       \STATE $\zeta \gets (\zeta_i)_{1 \leq i \leq n_{samples}}$
       \COMMENT{Sample a time horizon}
       \STATE $\tilde{\mathbf{x}} \gets (\mathbf{x}, \zeta)$ \COMMENT{Stacking the time to the features}
       \STATE $y, w \gets \text{ipcwComputer}(\mathbf{x}, \delta, t, \hat{G})$ \COMMENT{See Alg \ref{alg:ipcw_computer}}
       \STATE $L \gets \frac{1}{n} \sum_{i=1}^n \sum_{k=1}^{K} \left(
        \mathbb{1}_{y_i = k} ~y_i ~w_i~\log\left(\hat{F}_k(\zeta_i|\mathbf{x}_i)\right)
        \right) $
        
        \STATE $\qquad \qquad  + \mathbb{1}_{y_i = 0} ~y_i ~w_i \log\left(\hat{S}(\zeta_i|\mathbf{x}_i)\right)$
        \STATE $h_m(\mathbf{\tilde{x}}) \gets$ Train one iteration of Gradient Boost with $L$ as the loss
        \COMMENT{$h_m$ is the $m^{th}$ weak learner}
        \STATE $H_m(\mathbf{\tilde{x}}) \gets \nu h_{m}( \mathbf{\tilde{x}}) +  H_{m-1}( \mathbf{\tilde{x}})$ \COMMENT{$H_m$ is the estimator at the $m^{th}$ iteration, $\nu$ the learning rate}
       \STATE $(\hat{S}(\zeta |\mathbf{X}= \mathbf{x}), (\hat{F}_k(\zeta |\mathbf{X}= \mathbf{x})_{1\leq k \leq K}) \gets H_m(\mathbf{\tilde{x}})$ 
       \STATE $\hat{G} \gets$ Train one iteration of the Censoring-Feedback-Loop with $\hat{S}(\zeta |\mathbf{X}= \mathbf{x})$\COMMENT{See Alg \ref{alg:censo_one_iter} }
    \end{algorithmic}%
\end{algorithm}
Our algorithm utilizes two classifiers (here, gradient-boosted trees), one for censoring, trained on binary censored/non-censored labels (i.e., for time $\zeta$, $\mathbb{P}(C > \zeta| \mathbf{X}= \mathbf{x})$), and one for multiple events. Both the censoring and event models are adjusted using IPCW weights.
To compute these IPCW weights, we iterate the training using a feedback loop similar to boosting. First, we compute a survival censoring model. Then, using these probabilities, we initialize our \textsc{SurvivalBoost} algorithm. After several iterations, we apply a feedback loop to retrain the censoring model.\\ 
To capture complex temporal dependencies, we uniformly sample a time point for each observation and include it as an additional feature. Multiple time points can be sampled per iteration for each observation, generating a richer dataset where the targets vary based on the specific times sampled, thus providing a broader range of temporal information. This is enabled by our separable loss function.
An additional benefit is that we can predict the CIF at any time, unlike models optimized for a limited number of time points that require interpolation for other times. \\
\autoref{architecture} illustrates an iteration: we compute the weights $w_i$ and targets $y_i$ based on the sampled times for each individual (eq.\,\ref{eqn:full_loss}). Specifically, for censored samples, the corresponding weight is set to 0. A target $y_i \in \llbracket1, K\rrbracket$ indicates that the event of interest occurred before $\zeta$ and when $y_i = 0$, the individual has survived without experiencing any event. 
Algorithm \ref{alg:incidence_one_iter} gives pseudocode.

\section{COMPETING RISKS EXPERIMENTS}
\subsection{Evaluation Metrics For Competing Risks}

The evaluation is mainly performed on two metrics\footnote{We do not focus on the
C-index over time, as this metric is biased \citep{blanche_c-index_2019, rindt_survival_2022}}.

\paragraph{Evaluating the predicted probability} We extend the method proposed by \citet{graf_assessment_1999} and \citet{schoop_quantifying_2011}. The formula and a formal proof of the properness of the loss can be found in Appendix \ref{sec:evaluation_psr}. To avoid potential circularity with the loss function that we optimized, we apply this evaluation metric to the Brier Score rather than the log-loss. To evaluate the model across all time points, we sum the Brier Score over time, resulting in the \emph{Integrated Brier Score} (IBS).

\paragraph{Prediction accuracy in time}
In many applications, such as predictive maintenance or medicine, it is crucial to determine the first event a subject is likely to encounter. We use a validation metric to check, for each sample, whether the observed event is predicted as the most likely at given times, selected as before using quantiles. For example, for an individual who encounters event 2 at time $t$, the probability of surviving until $t$ should be the highest compared to the probabilities of encountering any other event. Additionally, the probability of encountering event 2 after $t$ should be the highest. To measure this, we adapt Multi-Class accuracy to different time points:

\begin{definition}[Prediction accuracy at time $\zeta$]
    For a fixed time horizon $\zeta$, and denoting survival to any event as index 0, define $\hat{y} = \argmax\limits_{k \in [0, K]} \hat{F}_{k}(\zeta | \mathbf{X}=  \mathbf{x})$, the most probable event at $\zeta$, and $y_\zeta = \mathbb{1}_{t\leq \zeta} \delta$. We remove censored individuals, and $n_{nc}$ represents the number of uncensored individuals at $\zeta$.
    \begin{equation}
        Acc(\zeta)= \frac{1}{n_{nc}} \sum_{i=1}^n \mathbb{1}_{\hat{y}_i = y_{i, \zeta}}~\mathbb{1}_{\overline{\delta_i = 0, t_i \leq \zeta}}
    \end{equation}
\end{definition}
\subsection{Experimental Settings}
\paragraph{Synthetic Dataset}
We design a synthetic dataset with linear relations between features and targets, as well as dependencies between the censoring distribution and the features (Appendix \ref{fig:distrib_synthe}). 
To create the synthetic dataset, for each sample, we draw $2 n_{events}$ parameters from a normal distribution. We then generate the event durations from a Weibull distribution based on those parameters. The observation is determined by the minimum duration and its associated event. The censoring event is computed using the same method.
\paragraph{SEER Dataset}
This dataset tracks 470,000 breast cancer patients for up to ten years,
with mortality due to various diseases as the outcomes. The censoring rate is
approximately $63\%$, and Figure \ref{fig:seer} shows the distribution of
events. Unlike \cite{lee_deephit_2018} (DeepHit) and
\cite{wang_survtrace_2022} (SurvTRACE), which focus on the
two most prevalent events and censor the others (undermining the competing risk framework), we consider three competing events, aggregating the remaining events into a third class. We also remove some
features following \citet{wang_survtrace_2022}.

\paragraph{Baselines}
We compare our approach with 7 other competing risks models from simpler models with \citet{aalen_survival_2008}'s global estimator and the \citet{fine_proportional_1999} linear model to more complex methods.  \\ 
We benchmark against tree-based approach - Random Survival Forests (RSF) \citep{ishwaran_random_2008} -, often criticized for its memory limitations. In our comparison, we also include several neural network-based models. This includes DeepHit \citep{lee_deephit_2018} which is trained with a ranking loss that combines the C-index with a negative log-likelihood, Deep Survival Machines (DSM) \citep{nagpal2021deep} which employ a graphical method for feature encoding and DeSurv \citep{danks_derivative-based_2022} solves Ordinal Differential Equations for continuous time predictions. Finally, we include a transformer-based model,  SurvTRACE \citep{wang_survtrace_2022} which is trained at three-time horizons (based on quantiles of observed event times) and at time 0. \\
To compute the Integrated Brier Score over time, other methods require linear interpolation of their trained times. For times beyond their trained intervals, we assume the incidence remains constant. In contrast, our method is trained on uniformly sampled time horizons, allowing for predictions at any time. \\
For fair model comparison, we use the same hyperparameter\hyp{}tuning time budget (grid in Appendix \ref{tab:gridsearch}).
\subsection{Results: Competing Risks}
\paragraph{Synthetic dataset}
\autoref{fig:tradeoff_competing_synthetic} illustrates the trade-off between statistical performance and training time for each model. Using the synthetic dataset, we are able to compute an oracle IBS. \textsc{SurvivalBoost} performs best in terms of IBS and is the fastest to train.  


\begin{figure}[t!]
\includegraphics[width=1.0\linewidth]{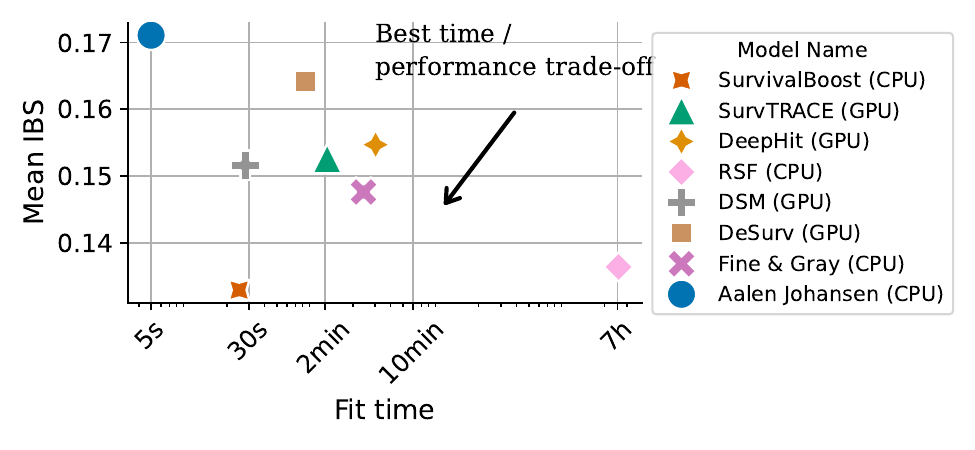}%
    \caption{\textbf{Prediction performance / training time trade-off for competing risk on the synthetic dataset.} Average IBS compared the fitting time for each model on 20k training data points, with a censoring rate of approximately 50\% and a dependant censoring across 6 features.}
\label{fig:tradeoff_competing_synthetic}%
\end{figure} %
\begin{figure}[t!]
\includegraphics[width=1.0\linewidth]{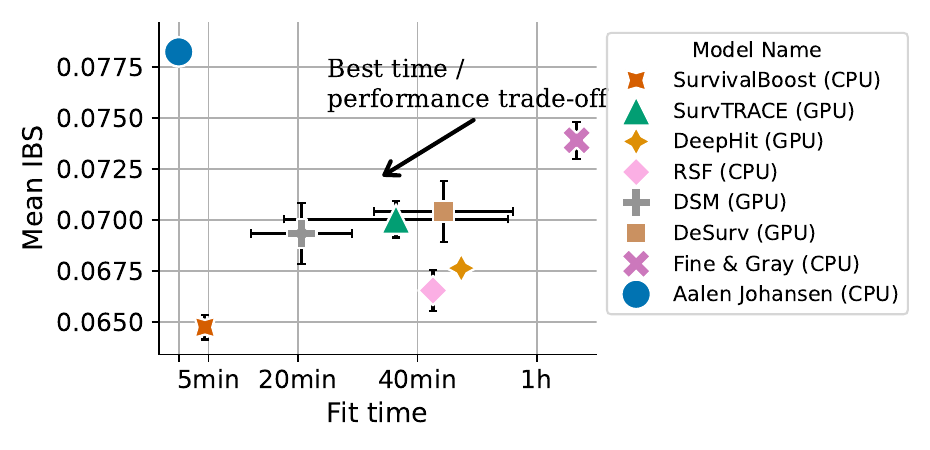}%
    \caption{\textbf{Prediction performance / training time trade-off for competing risks on SEER dataset}. Average IBS versus fitting time for each model, with a maximum of 330k training points, except for Fine \& Gray (50k) and RSF (100k). Table \ref{tab:ibs_event_seer} provides the IBS values for each event.}
    \label{fig:tradeoff_competing}
\end{figure} 
\vspace{-0.5em}
\paragraph{Results on SEER Dataset}
On the real-life dataset, we keep 30\% of the data for testing the models. 
\autoref{fig:tradeoff_competing} compares the models using the Integrated Brier Score (with Kaplan-Meier weights from
\cite{graf_assessment_1999} due to the absence of an oracle).
\textsc{SurvivalBoost} achieves both the best score and the shortest training time.
Random Survival Forest struggles with larger sample sizes (100k) and requires more than 50\,GB of RAM.
\textsc{SurvivalBoost} also maintains a significant lead with less training samples (Appendix \ref{app:seer_results}). 

\begin{figure}[t!]
	\includegraphics[width=\linewidth]{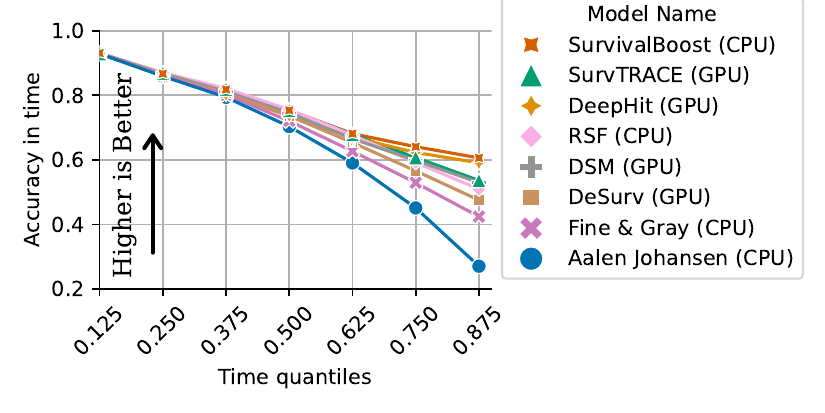}%
 \caption{\textbf{Prediction accuracy at time $\zeta$} Accuracy of the Argmax of the Cumulative Incidence Functions across different time quantiles on the SEER dataset.}
 \label{fig:acc}
\end{figure}

Event and time-specific C-indexes are presented in \autoref{tab:results_seer}, but they do not capture the models' ability to predict which event is more likely to occur at a given time horizon. This capability is measured by the accuracy in time, shown in \autoref{fig:acc}, where \textsc{SurvivalBoost} demonstrates the best performance. The advantage increases as time progresses, indicating that \textsc{SurvivalBoost} interpolates more effectively over time. 

\section{USAGE IN SURVIVAL ANALYSIS}
\subsection{Survival Experiments}
\paragraph{Real-life Datasets}
As our model can also handle survival analysis, we conducted experiments on three real-life survival datasets. 

\begin{description}[itemsep=1pt, parsep=1pt, topsep=0pt]
\item[METABRIC] \citep{metabric} The Molecular Taxonomy of Breast Cancer International Consortium dataset contains gene expression data with approximately 2,000 data points.
\item[SUPPORT] \citep{support} Study to Understand Prognoses Preferences Outcomes and Risks of Treatment dataset includes survival times for hospital patients, with more than 8,000 data points.
\item[KKBOX] The Churn Prediction Challenge 2017 hosted on \href{ https://www.kaggle.com/c/kkbox-churn-prediction-challenge}{Kaggle}, which features administrative censoring and 2.5M data points. We trained the models over 100k, 1M, and 2M data points to assess scalability (see Appendix, Fig. \ref{fig:cenlog-kkbox}).
\end{description}

\paragraph{Evaluation}
We use various metrics to evaluate models: the Integrated Brier Score (detailed in Appendix \ref{sec:evaluation_psr}) and another metric from \citet{yanagisawa2023proper}, called $S_{Cen-log-simple} \defeq S_{C-l-s}$ (detailed in Appendix \ref{sec:cen_log_simple}). 
Although this metric approximates the proper scoring metric from \citet{rindt_survival_2022}, it is not exactly proper (see Appendix \ref{sec:cen_log_simple}). It can be applied to any model as it does not require the density of the CIFs. 

\begin{table*}[t]
\caption{\textbf{Survival datasets}: Integrated Brier Score and $S_{C-l-s}$ (Lower is Better) depending on the size of each dataset. The \xmark \, indicates models that could not handle the data volume due to memory limitations.}
\small
\centering 
\begin{tabular}{l|rr|rr|rr}
\toprule
\hfill Dataset & \multicolumn{2}{c|}{\textbf{METABRIC (1k)}}&  \multicolumn{2}{c|}{\textbf{SUPPORT (8k)}} & \multicolumn{2}{c}{\textbf{KKBOX (1M)}} \\
\toprule
Model& IBS& $S_{C-l-s}$ & IBS & $S_{C-l-s}$ & IBS & $S_{C-l-s}$ \\
\midrule
Kaplan-Meier & .185±.010 & 2.039±.218 & .208±.004 & 1.617±.268 & .213±.001 & 1.723±.002\\
DeepHit &.171±.009&2.039±.001 & .207±.004 & 1.771±.000 & .147±.001 & 1.609±.002\\
PCHazard&.169±.011&1.980±.086 &.187±.004&1.673±.004& \underline{.107±.002}& 1.286±.002\\
\citeauthor{han2021inverse} &.196±.004&2.665±.036&.253±.002&3.223±.005& \xmark & \xmark\\
DQS8&.172±.018&2.200±.000 &.202±.004&2.764±.12& .119±.001 & 3.791±.027\\
SuMo net&.170±.010&2.197±.000&.194±.006&1.818±.000& \xmark & \xmark\\
SurvTRACE&.172±.006&1.987±.088&.188±.004&1.606±.003& .111±.002& \underline{1.270±.008}\\
RSF &\textbf{.165±.025}&\textbf{1.937±.227}& \underline{.182±.004}&1.942±.023&\xmark & \xmark \\
GBS & .169±.011 & \underline{1.974±.404} & .187±.004 & \underline{1.575±.001} &.157±.001 &1.511±.001\\
\bfseries \textsc{SurvivalBoost}&\underline{.168±.019}&2.027±.159 &\textbf{.181±.005}& \textbf{1.569±0.341} & \textbf{.105±.001}& \textbf{1.183±.029}\\
\bottomrule
\end{tabular}
\label{tab:survival}%
\end{table*}
\vspace{-0.5em}
\paragraph{Baselines}
We benchmark our method against the most performant competing risks and SOTA survival models. This includes neural networks such as DeepHit \citep{lee_deephit_2018} and PCHazard \citep{kvamme_continuous_2019}, as well as those trained with proper survival analysis scoring rules, such as SumoNet \citep{rindt_survival_2022}, and DQS \citep{yanagisawa2023proper}. We also evaluate transformer methods with SurvTRACE \citep{wang_survtrace_2022},
survival games \citep{han2021inverse}, and tree-based methods with Random Survival Forests (RSF) \citep{ishwaran_random_2008} and Gradient Boosting Survival Analysis (GBS) - from \href{https://scikit-survival.readthedocs.io/en/v0.23.0/api/generated/sksurv.ensemble.GradientBoostingSurvivalAnalysis.html}{Scikit-survival} \citep{sksurv}.

\subsection{Results: Survival Analysis}

\begin{figure}[b!]
    \includegraphics[width= \columnwidth]{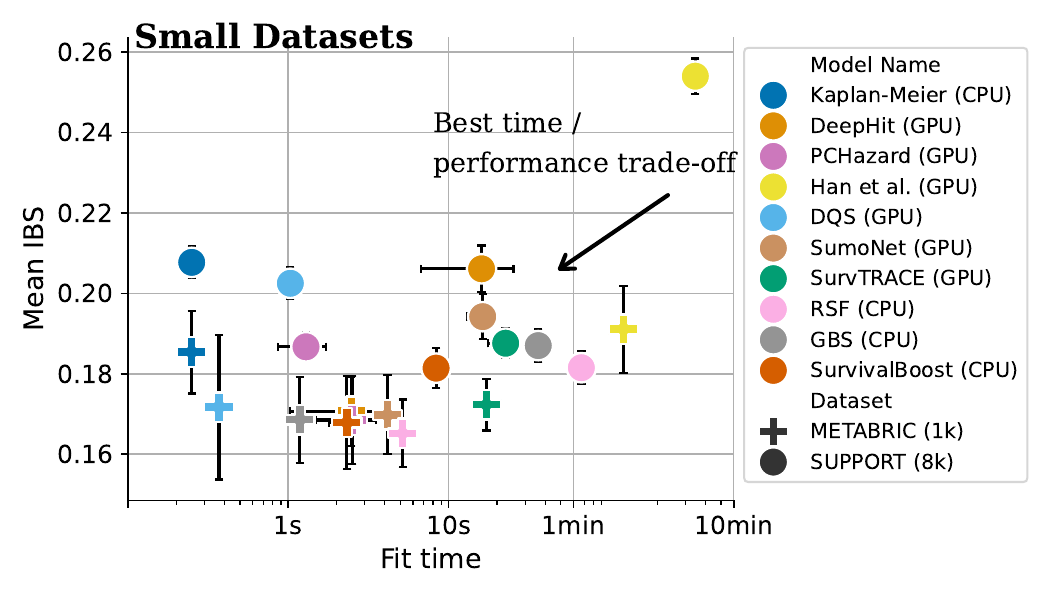}

    \includegraphics[width= \columnwidth]{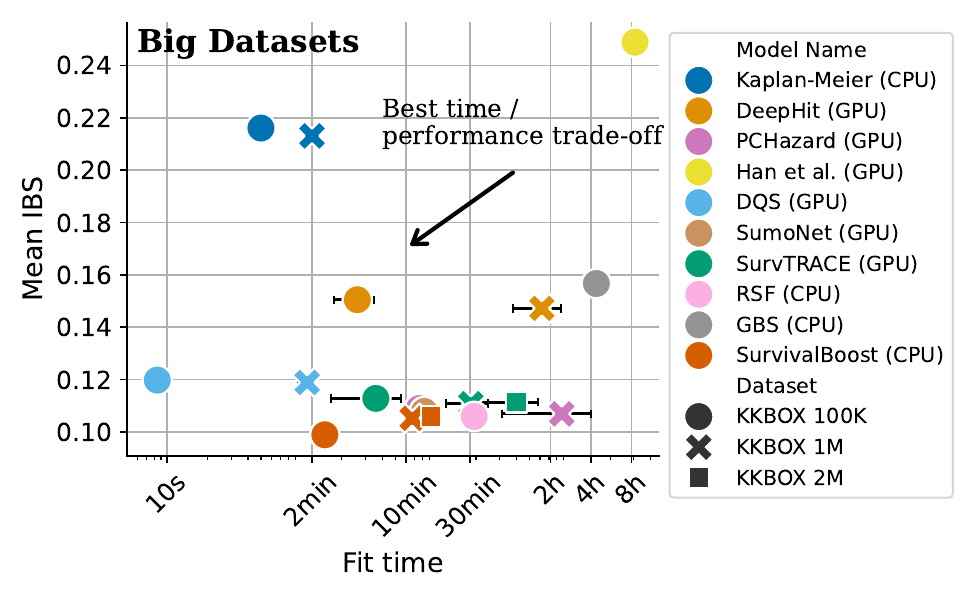}%
    \caption{\textbf{Prediction performance / training time trade-off in survival analysis} IBS (Integrated Brier score) function of fit time for each model on real-life datasets. For the big datasets, some algorithms exceeded computing resources.}
    \label{fig:tradeoff-ibs-survival}\label{fig:tradeoffkkbox}
\end{figure}

\autoref{fig:tradeoff-ibs-survival} shows the trade-off between training time and performance in terms of IBS, where \textsc{SurvivalBoost} excels, being the top model in statistical performance and one of the fastest on the datasets with enough data (SUPPORT and KKBOX) while being one of the best models for smaller datasets (METABRIC). Appendix \ref{tradeoffyana} provides a similar figure for the $S_{Cen-log-simple}$ metric, where \textsc{SurvivalBoost} achieves an excellent trade-off rivaled only by SumoNet, which has comparable performance on the $S_{Cen-log-simple}$ loss. Varying the sample size from 100k to 2M on the KKBOX dataset confirms that \textsc{SurvivalBoost} and DQS are faster (taking less than 1 minute on 100k data points), while \citeauthor{han2021inverse}, SumoNet, and RSF are slower for larger sample size. They exhibit super-linear time complexity, making them impractical for large datasets; for more than 100k data points they exceed memory limitations (See Appendix \ref{infra}). \\
\autoref{tab:survival} report evaluation metrics, including $S_{Cen-log-simple}$ which is not what \textsc{SurvivalBoost} directly optimizes. Across datasets, \textsc{SurvivalBoost} achieves the best results in terms of IBS and is tied with SumoNet for $S_{Cen-log-simple}$ (also for C-index, Appendix \ref{saall}). It is worth noting that SumoNet uses $S_{Cen-log-simple}$ as its training loss. However, this metric is not guaranteed to be a proper scoring rule, meaning it does not necessarily ensure accurate recovery of the true risks. For KKBOX, we only show the results for 1M data points.

Beyond proper scores, we investigate calibration, MAE, MSE, and the AUC adapted for survival analysis (Appendix \ref{tab:metabric-othermetrics}, \ref{tab:support-othermetrics}, \ref{tab:kkbox-othermetrics}). We assess the calibration using four tests, including  distribution calibration ($D_c$) \citep{haider_effective_2018} and One-time calibration ($\textsc{one}_c$) \citep{hosmer_comparison_1997}. Kaplan-Meier, \textsc{SurvivalBoost}, and RSF are the most calibrated models (Appendix \ref{tab:calibration-metrics}).

\section*{DISCUSSION AND CONCLUSION}
\paragraph{Code reproducibility and data}
The code will be made available on GitHub as a library.

\paragraph{Combination of tree-based architecture and loss function makes the difference}
Our work shares similarities with the equations in \cite{han2021inverse}, which also uses IPCW \cite[introduced by ][]{robins_estimation_1994}, though for survival and not competing risks.
Their learning strategy targets an equilibrium, showing that it recovers the oracle distribution in survival analysis settings. Meanwhile, our optimization uses a loss on all classes to compute the censoring distribution, while the other part optimizes only for the survival distribution. This last part departs from the schema in \cite{han2021inverse}. Despite similarities, the two approaches behave markedly different our empirical study.\\
Building upon trees-based model is probably important to this difference and to the success of
\textsc{SurvivalBoost}. Yet, comparing to GBS and RSF show that trees in themselves do not suffice. Our loss is crucial for scalability (as it is separable) and to facilitate fitting trees, as it avoids the need for time derivatives. It avoids issues that plague many competing risks methods.
The excellent empirical results, superior performance with less computational resources, come from combining the loss function with the tree-based approach results in a very stable algorithm. This double gain is especially valuable as health datasets continue to grow in size.

\paragraph{Acknowledgments}
JA, JA, and GV acknowledge funding from the ERC grand INTERCEPT-T2D.

\paragraph{Limitations and further work} Further work should consider removing the assumption of non-informative censoring \ref{info_censoring}. This assumption is very common in the literature, though some recent work has relaxed it in survival settings \cite{foomani2023copulabased, zhang2023deep}.

\paragraph{Conclusion}

For competing risks, which generalizes survival analysis to classify the type of outcome, we first propose and prove a strictly proper scoring rule. This reweighted log loss can easily be used in machine learning models: it is separable by observation, making it suitable for stochastic solvers, it does not require time derivatives (unlike most survival models) and it can be applied to non-differentiable models. We integrate it into gradient-boosting trees, resulting in an algorithm called \textsc{SurvivalBoost}. By using time as a feature and incorporating a feedback loop to better estimate censoring probabilities, \textsc{SurvivalBoost} outperforms state-of-the-art methods on both synthetic and real-life datasets, for both competing risks (classification on time-censored data) and standard survival analysis (time-to-event regression with right censoring). It also trains faster on large datasets. As a loss function, it allows survival analysis or competing risks modeling to be easily extended to a wide range of models— from scalable linear models to deep learning architectures, including fine-tuning foundation models— replacing clinical standards like \citeauthor{fine_proportional_1999} that do not scale.

\bibliography{papier2}

\bibliographystyle{plainnat}

\newpage

\newpage
\appendix

 \xdef\presupfigures{\arabic{figure}}
  \xdef\presuptables{\arabic{table}}
\renewcommand\thefigure{S\fpeval{\arabic{figure}-\presupfigures}} 
\renewcommand\thetable{S\fpeval{\arabic{table}-\presuptables}} 

\onecolumn

\section{Definitions}
\subsection{Notations}

Below, we detail the notations used throughout the main manuscript, as well as in the proofs and derivations.

The following conventions apply to all symbols:
\begin{itemize}
    \item $.^*$: Oracle
    \item $\hat{.}$: Estimation
\end{itemize}

The different variables that we use are:
\begin{table}[h!]
\begin{tabularx}{\linewidth}{r r X}
\toprule
Maths Symbol & Domain & Description\\
\midrule 
    $\zeta$ & $\mathbb{R}_+$& Time horizon \\
\midrule
    $K$ & $\mathbb{N}^*$ & number of competing events (events of interest)\\ 
    $\mathbf{X}$ & $\mathcal{X}$& random variable representing an individual \\
    $T^*_k$ &$\mathbb{R}_+$& random variable of the time-to-event for event $k$\\
    $C$ &$\mathbb{R}_+$& random variable of the time-to-censoring\\
    $T^*$ & $\mathbb{R}_+$ & $\min (T^*_1, T^*_2, ..., T^*_K)$\\
    $T$ & $\mathbb{R}_+$ & $\min (T, C)$ \\
    $\Delta^*$ &$[1, K]$&$\argmin\limits_{k \in [1, K]} (T^*_k)$\\
    $\Delta$ &$[0, K]$&$\argmin (C, T^*_1, T^*_2, ..., T^*_K)$\\
    \midrule
    S & $\mathcal{S}$ & Survival function\\
    F & $\mathcal{F}$ & Cumulative Incidence Function \\
    G & $\mathcal{G}$ & Censor function\\
    \midrule
    $n$ &$\mathbb{N}^*$& number of individuals in our observation \\
    $i$ &$[1, n]$& one observation\\
    $\mathbf{x}_i$ &$\mathcal{X}^n$& individuals observed \\
    $t_i$ &$\mathbb{R}_+^n$& time-to-event/censoring observed\\
    $\delta_i$ & $[0, K]$& event observed, 0 indicates censoring \\
\bottomrule
\end{tabularx}
\caption{Notations used}
\label{tab:history}
\end{table}

\subsection{Reporting conventions}

In the tables, the best results are highlighted in bold, and the second-best results are underlined.




\section{Theory on our proper scoring rule: proofs and derivations}
In this appendix, we give the proofs and derivations concerning the proper scoring rule that we have introduced.

\usefullemma*
\begin{proof}[Proof the of Lemma \ref{lem:usefulllemma} on the expectation of the Reweighted NLL]\label{prooflemma}

\begin{multline}
    \forall \zeta, \forall k \in \llbracket1, K\rrbracket, (\mathbf{x}, t, \delta) \sim \mathcal{D}, \\
    \mathrm{L}_{\zeta}\left((\hat{F}_1(\zeta| \mathbf{x}), ..., \hat{F}_K(\zeta| \mathbf{x}), \hat{S}(\zeta| \mathbf{x})), (t, \delta)\right) \defeq \frac{1}{n} \sum_{i=1}^n \left(\sum_{k=1}^{K}
    \underbrace{
    \dfrac{
        \mathbb{1}_{t_i \leq \zeta, \delta_i = k} ~~\log\left(\hat{F}_k(\zeta|\mathbf{x}_i)\right)
        }
        {
        \tikzmarknode{censure_i_eval}{\highlight{orange}
            {$G^*(t_i|\mathbf{x}_i) $}
            }}}_{\defeq \Psi_{k, \zeta}(\hat{F}_k(\zeta| \mathbf{x}), ( t, \delta))} \right)
        +
        \underbrace{\dfrac{
        \mathbb{1}_{t_i > \zeta} ~~ \log\left(\hat{S}(\zeta|\mathbf{x}_i)\right)
        }
        {
        \tikzmarknode{censure_t_eval}{\highlight{cyan}
            {$G^*(\zeta|\mathbf{x}_i) $}
        }}}_{\defeq \Lambda_{k, \zeta}(\hat{S}(\zeta| \mathbf{x}), ( t, \delta))}
\end{multline}

For the next computations, we recall the definition of the different variables.

\paragraph{Computation of the expectation:} 
First: 
\begin{align}
    \mathbb{E}_{T^*, C, \Delta |\mathbf{X} =\mathbf{x}}\left[\Psi_{k, \zeta}(\hat{F}_k(\zeta| \mathbf{x}), ( T, \Delta))\right] &= 
    \mathbb{E}_{T^*, C, \Delta |\mathbf{X} =\mathbf{x}}\left[ 
                    \mathbb{1}_{T \leq \zeta} \mathbb{1}_{\Delta = k}  
                    \dfrac{\log\left(\hat{F}_k(\zeta|\mathbf{x})\right)}{G^*(T|\mathbf{x})}  
                    \right] \\
    &=\log\left(\hat{F}_k(\zeta|\mathbf{x})\right) ~\mathbb{E}_{T^*, C, \Delta |\mathbf{X} =\mathbf{x}}\left[
                    \dfrac{
                    \mathbb{1}_{\min(T^*, C) \leq \zeta}
                    \mathbb{1}_{\Delta = k} 
                    }
                    {
                    G^*(T|\mathbf{x})
                    }
                    \right] \\
    &=\log\left(\hat{F}_k(\zeta|\mathbf{x})\right) ~\mathbb{E}_{T^*, C, \Delta |\mathbf{X} =\mathbf{x}}\left[
                    \dfrac{
                    (\mathbb{1}_{T^*\leq \zeta}\mathbb{1}_{T^*\leq C} + \mathbb{1}_{C\leq \zeta}\mathbb{1}_{C\leq T^*}
                    )
                    \mathbb{1}_{\Delta = k} 
                    }
                    {
                    G^*(T|\mathbf{x})
                    }
                    \right] \\
    &=\log\left(\hat{F}_k(\zeta|\mathbf{x})\right) ~\mathbb{E}_{T^*, C, \Delta |\mathbf{X} =\mathbf{x}}\left[
                    \dfrac{
                    \mathbb{1}_{T^*\leq \zeta}\mathbb{1}_{T^*\leq C}
                    \mathbb{1}_{\Delta = k} 
                    }
                    {
                    G^*(T|\mathbf{x})
                    }
                    +
                    \underbrace{\dfrac{
                    \mathbb{1}_{C\leq \zeta}\mathbb{1}_{C\leq T^*}
                    \mathbb{1}_{\Delta = k} 
                    }
                    {
                    G^*(T|\mathbf{x})
                    }}_{= 0 \text{ because $k\neq 0$}}
                    \right] \\
&=\log\left(\hat{F}_k(\zeta|\mathbf{x})\right) ~\mathbb{E}_{T^*, C, \Delta |\mathbf{X} =\mathbf{x}}\left[
                    \dfrac{
                    \mathbb{1}_{T^* \leq \zeta} \mathbb{1}_{T^* \leq C} 
                    \mathbb{1}_{\Delta = k} 
                    }
                    {
                    G^*(T|\mathbf{x})
                    }
                    \right] \\
    &= \log\left(\hat{F}_k(\zeta|\mathbf{x})\right)
        \mathbb{P}(T^*\leq \zeta, \Delta = k | \mathbf{X}= \mathbf{x}) 
\end{align}
The last equality can be expanded as follows:
\begin{align}
    \mathbb{E}_{T^*, C, \Delta |\mathbf{X} =\mathbf{x}}\left[
                    \dfrac{
                    \mathbb{1}_{T^* \leq \zeta} \mathbb{1}_{T^* \leq C} 
                    \mathbb{1}_{\Delta = k} 
                    }
                    {
                    G^*(T|\mathbf{x})
                    }
                    \right]
                    &= \int_0^\infty \int_0^\infty(\mathbb{1}_{\min(t, c) = t} + \underbrace{\mathbb{1}_{\min(t,c)= c}}_{= 0   \text{~because $k\neq 0$}}) \dfrac{
                    \mathbb{1}_{t \leq \zeta} \mathbb{1}_{t \leq c} 
                    }
                    {
                    G^*(t|\mathbf{x})
                    }
                    f_{T^*, C, \Delta}(t, c, k |\mathbf{x}) dt \,  dc  \\
                    &\text{T is a composition of $T^*$ and $C$ } \nonumber \\
                    &=\int_0^\infty \int_0^\infty \dfrac{
                    \mathbb{1}_{t \leq \zeta} \mathbb{1}_{t \leq c} 
                    }
                    {
                    G^*(t|\mathbf{x})
                    }
                    f_{T^*, C, \Delta}(t, c, k |\mathbf{x}) dt \, dc \\
                    &= \int_0^\infty \int_0^\infty \dfrac{
                    \mathbb{1}_{t \leq \zeta} \mathbb{1}_{t \leq c} 
                    }
                    {
                    G^*(t|\mathbf{x})
                    }
                    f_{T^*, \Delta}(t, k|\mathbf{x}) f_{C}(c|\mathbf{x}) dt \,  dc \\
                    &\text{Because  } T^* \indep C |\mathbf{X} \nonumber \\
                    &=  \int_0^\infty \dfrac{
                    \mathbb{1}_{t \leq \zeta}
                    }
                    {
                    G^*(t|\mathbf{x})
                    }
                    f_{T^*, \Delta}(t, k|\mathbf{x}) \left(\int_0^\infty  \mathbb{1}_{t \leq c} f_{C}(c|\mathbf{x}) dc \right) dt  \\ 
                    &=  \int_0^\infty \dfrac{
                    \mathbb{1}_{t \leq \zeta}
                    }
                    {
                    G^*(t|\mathbf{x})
                    }
                    f_{T^*, \Delta}(t, k|\mathbf{x}) \left(G^*(t|\mathbf{x}) \right) dt  \\ 
                    &\text{with the definition of $G^*$} \nonumber\\
                    &=  \int_0^\infty 
                    \mathbb{1}_{t \leq \zeta}
                    f_{T^*, \Delta}(t, k|\mathbf{x}) dt \\
                    &= \mathbb{P}(T^*\leq \zeta, \Delta = k | \mathbf{X}= \mathbf{x})
\end{align}

And: 
\begin{align}
    \mathbb{E}_{T, \Delta |\mathbf{X} =\mathbf{x}}\left[\Lambda_{k, \zeta}(\hat{S}(\zeta| \mathbf{X}= \mathbf{x}), (T, \Delta))|\mathbf{X}=\mathbf{x}\right] &= 
    \mathbb{E}_{T, \Delta |\mathbf{X} =\mathbf{x}}\left[
                    \mathbb{1}_{T>\zeta}
                    \dfrac{\log\left(\hat{S}(\zeta| \mathbf{X}= \mathbf{x})\right)}{G^*(\zeta|\mathbf{x})}  
                    \right] \\
        &= \log\left(\hat{S}(\zeta| \mathbf{X}= \mathbf{x})\right) \mathbb{E}_{T, \Delta |\mathbf{X} =\mathbf{x}}\left[ 
                    \dfrac{
                    \mathbb{1}_{\min(T^*, C)>\zeta}
                    }
                    {G^*(\zeta|\mathbf{x})} 
                    \right]\\
        &= \log\left(\hat{S}(\zeta| \mathbf{X}= \mathbf{x})\right) \mathbb{E}_{T, \Delta |\mathbf{X} =\mathbf{x}}\left[ 
                    \dfrac{
                    \mathbb{1}_{T^*>\zeta}\mathbb{1}_{C>\zeta}
                    }
                    {G^*(\zeta|\mathbf{x})} 
                    \right]\\
        &= \log\left(\hat{S}(\zeta| \mathbf{X}= \mathbf{x})\right) \mathbb{E}_{T, \Delta |\mathbf{X} =\mathbf{x}}\left[ 
                    \dfrac{
                    \mathbb{1}_{C>\zeta}
                    }
                    {G^*(\zeta|\mathbf{x})} 
                    \right]\mathbb{E}_{T, \Delta |\mathbf{X} =\mathbf{x}}\left[ 
                    \mathbb{1}_{T^*>\zeta}
                    \right]\\
        &\text{Because  } T^* \indep C |\mathbf{X} \nonumber\\
        &= \log\left(\hat{S}(\zeta| \mathbf{X}= \mathbf{x})\right) 
                    \dfrac{
                    \mathbb{E}_{T, \Delta |\mathbf{X} =\mathbf{x}}\left[ \mathbb{1}_{C>\zeta}\right]
                    }
                    {G^*(\zeta|\mathbf{x})} 
                    \mathbb{E}_{T, \Delta |\mathbf{X} =\mathbf{x}}\left[ 
                    \mathbb{1}_{T^*>\zeta}
                    \right]\\
        &\text{Because } G^*(\zeta|\mathbf{x}) \text{ does not depend of } T \text{ and } \Delta \nonumber\\
        &= \log\left(\hat{S}(\zeta| \mathbf{X}= \mathbf{x})\right) \mathbb{P}(T^*>\zeta| \mathbf{X} = \mathbf{x})\\
\end{align}

By summing all of the terms, we obtain: 

\begin{multline}
    \mathbb{E}_{T^*, C, \Delta |\mathbf{X} =\mathbf{x}}\left[\mathrm{L}_{\zeta}\left((\hat{F}_1(\zeta| \mathbf{x}), ..., \hat{F}_K(\zeta| \mathbf{x}), \hat{S}(\zeta| \mathbf{x})), (T, \Delta)\right)\right] \\
    = \sum_{k=1}^K \log\left(\hat{F}_k(\zeta|\mathbf{x})\right)
        \mathbb{P}(T^*\leq \zeta, \Delta = k | \mathbf{X}=\mathbf{x}) \\ 
     + \log\left(\hat{S}(\zeta| \mathbf{X}= \mathbf{x})\right) \mathbb{P}(T^*>\zeta| \mathbf{X} = \mathbf{x}) 
\end{multline}
\begin{align}
    &=\sum_{k=1}^K \log\left(\hat{F}_k(\zeta|\mathbf{x})\right)
        F^*_k(\zeta |\mathrm{x})  
     + \log\left(\hat{S}(\zeta|\mathbf{x})\right)
        S^*(\zeta| \mathbf{x})
\end{align}

Finally:
\begin{multline}
    \mathbb{E}_{T^*, C, \Delta |\mathbf{X} =\mathbf{x}}\left[\mathrm{L}_{\zeta}\left((\hat{F}_1(\zeta| \mathbf{x}), ..., \hat{F}_K(\zeta| \mathbf{x}), \hat{S}(\zeta| \mathbf{x})), (T, \Delta)\right)\right] \\
    =\sum_{k=1}^K \log\left(\hat{F}_k(\zeta|\mathbf{x})\right)
        F^*_k(\zeta |\mathrm{x})  
     + \log\left(\hat{S}(\zeta|\mathbf{x})\right)
        S^*(\zeta| \mathbf{x})
\end{multline}

\end{proof}

\begin{proof}[Proof of the Theorem \ref{thm:bigtheorem}]\label{psrweights}
\bigthm*
To be more explicit, we can define a new random variable $Y$: 
\begin{definition}
     $$\forall \zeta, ~ Y_{k, \zeta} \defeq T^* \leq \zeta \cap \Delta = k$$
     And:
     $$\forall \zeta, ~ Y_{0, \zeta} \defeq T^* > \zeta$$
\end{definition}  

Thus, the previously mentioned quantities of interest can be rewritten as functions of these variables:
\begin{equation}
    F^*_k(\zeta |\mathbf{x}) = \mathbb{P}(T^*\leq \zeta, \Delta = k| \mathbf{X} = \mathbf{x}) = \mathbb{P}(Y_{k, \zeta}=1| \mathbf{X} = \mathbf{x})
\end{equation}
\begin{equation}
    S^*(\zeta |\mathbf{x}) = \mathbb{P}(T^*> \zeta| \mathbf{X} = \mathbf{x}) = \mathbb{P}(Y_{0, \zeta}=1| \mathbf{X} = \mathbf{x})
\end{equation}

$\hat{F}_k(\zeta|\mathbf{x})$ represents the estimated probability that $Y_{k, \zeta}=1$, so we rewrite it as $\hat{p}_{k, \zeta} \defeq \hat{F}_k(\zeta|\mathbf{x})$. \\
Therefore: 
\begin{flalign}
    \mathbb{E}_{T^*, C,\Delta |\mathbf{X} =\mathbf{x}}\left[\mathrm{L}_{k, \zeta}(\hat{F}_k(\zeta |\mathbf{x}), ( T, \Delta))\right] &=\mathbb{E}_{T, \Delta |\mathbf{X} =\mathbf{x}}[\mathrm{L}_{\zeta}(\hat{p}_{\zeta}, ( T, \Delta))] \\
    &= \sum_{k=0}^K \log\left(\hat{p}_{k, \zeta}\right)
        \mathbb{P}(Y_{k, \zeta}=1| \mathbf{X} = \mathbf{x})  \\
    &\text{Using Lemma \ref{lem:usefulllemma}} \nonumber
\end{flalign}
Thus, we obtain the following optimization problem:
\begin{equation}
\begin{aligned}
\max_{\hat{p}} \quad &  \sum_{k=0}^K \log\left(\hat{p}_{k, \zeta}\right)
        \mathbb{P}(Y_{k, \zeta}=1| \mathbf{X} = \mathbf{x}) \\
\textrm{s.t.} \quad & \sum_{k=0}^K \hat{p}_k = 1\\
  &\hat{p}_k\geq0    \\
\end{aligned}
\end{equation}
The problem can be reformulated as a convex optimization problem due to the concavity of the logarithm: 
\begin{equation}
\begin{aligned}
\min_{\hat{p}} \quad &  - \sum_{k=0}^K \log\left(\hat{p}_{k, \zeta}\right)
        \mathbb{P}(Y_{k, \zeta}=1| \mathbf{X} = \mathbf{x})\\
\textrm{s.t.} \quad & \sum_{k=0}^K \hat{p}_k = 1\\
  &\hat{p}_k\geq0    \\
\end{aligned}
\end{equation}

We apply the Karush-Kuhn-Tucker conditions since the constraints are qualified (as they are linear). These conditions imply that if $p$ is a local minimum of the problem, there exits $\lambda \in \mathbb{R}$ and $\mu \in \mathbb{R}_+^{K+1}$ such that: 

\begin{align}
    &\nabla \left(- \sum_{k=0}^K \log\left(\hat{p}_{k, \zeta}\right)  \mathbb{P}(Y_{k, \zeta}=1| \mathbf{X} = \mathbf{x})\right) - \mu ^\top \mathbf{1}_K + \lambda = 0 \\
    &\forall k, \mu_k \hat{p}_{k, \zeta} = 0
\end{align}

If $\exists k, \hat{p}_{k,\zeta} = 0 \implies - \sum_{k=0}^K \log\left(\hat{p}_{k, \zeta}\right)
        \mathbb{P}(Y_{k, \zeta}=1| \mathbf{X} = \mathbf{x}) = +\infty$. \\
Hence, equation (36) implies that $\forall k, \mu_k = 0$. \\

Now,
\begin{align}
    & \forall k, \frac{\partial \left(- \sum_{k=0}^K \log\left(\hat{p}_{k, \zeta}\right)  \mathbb{P}(Y_{k, \zeta}=1| \mathbf{X} = \mathbf{x})\right)}{\partial \hat{p}_{k, \zeta}} = -\frac{\mathbb{P}(Y_{k, \zeta}=1| \mathbf{X} = \mathbf{x})}{\hat{p}_{k, \zeta}} \\
    &\text{(37) can be rewritten as:} \nonumber \\
    &\forall k, \quad - \frac{\mathbb{P}(Y_{k, \zeta}=1| \mathbf{X} = \mathbf{x})}{\hat{p}_{k, \zeta}} + \lambda = 0 \\
    &\Longrightarrow \forall k, - \mathbb{P}(Y_{k, \zeta}=1| \mathbf{X} = \mathbf{x}) + \lambda\hat{p}_{k, \zeta} = 0 \\
    &\text{By summing over } k, \nonumber\\
    &\Longrightarrow - \underbrace{\sum_{k=0}^K  \mathbb{P}(Y_{k, \zeta}=1| \mathbf{X} = \mathbf{x})}_{= 1} + \lambda \underbrace{\sum_{k=0}^K\hat{p}_{k, \zeta}}_{=1} = 0 \\
    &\Longrightarrow \lambda = 1\\
    &\Longrightarrow \forall k, \hat{p}_{\zeta, k} =  \mathbb{P}(Y_{k, \zeta}=1| \mathbf{X} = \mathbf{x})
\end{align}

Any local minimum must satisfy the KKT conditions. Therefore, if $p$ is a local minimum, it is a solution to equations (34) and (42). Consequently, as shown above, the only possible solution must be equal to the oracle distribution. Indeed, the loss is strictly proper. 

\end{proof}

\section{Study of the proper scoring rule used for evaluation}%
\label{sec:evaluation_psr}

As mentioned earlier, the most commonly used metric in the competing risks setting, the C-index over time, is known to be biased \citep{blanche_c-index_2019, rindt_survival_2022}. To address this significant issue in evaluation strategies, we propose two alternative evaluation metrics: one based on a reweighted proper scoring rule, which can be applied to any proper binary scoring rule, and another based on accuracy over time, which measures the observed event against the most likely predicted event.

\subsection{PSR for evaluation}
The PSR introduced in the main paper as the loss function of our algorithm serves as a global loss across all predictions. The following loss is adapted to focus on a specific event $k$, allowing us to evaluate our estimates for that event. In the paper, we focus on the IBS, though one could alternatively use a logarithmic loss because of its properness.  
\paragraph{Proper scoring rule for the $k^{th}$ competing event}
In our setting, we denote $L_{k, \zeta}$ as a scoring rule for the $k^{th}$ CIF at a time horizon $\zeta$. 
\begin{definition}[\emph{PSR for the $k^{th}$ cause-specific event}]
The scoring rule $L_{k, \zeta}$ for the $k^{th}$ CIF at time $\zeta$ for an observation $(\mathbf{X}, T, \Delta)$ is proper if and only if:
 \begin{flalign}
   \forall \zeta, (\mathbf{X}, T, \Delta )\sim \mathcal{D}, ~~
    \mathbb{E}_{T^*, C, \Delta  | \mathbf{X}= \mathbf{x}}[L_{k, \zeta}(\hat{F}_k(\zeta| \mathbf{x}), (T, \Delta))]  \leq \mathbb{E}_{T^*, C, \Delta  | \mathbf{X}= \mathbf{x}}[L_{k, \zeta}(F^*_k(\zeta| \mathbf{x}), (T, \Delta))]
\end{flalign}
\end{definition}

\subsubsection{A proper scoring rule for competing risks}
To evaluate our model, we used the following proper scoring rule, which is appropriate for each event. This proper scoring rule allows us to assess the error for each specific event and the global error across all CIFs. 

In the following, we prove that any given (strictly) proper scoring rule that can be used in the multiclass setting (\emph{e.g.} the Brier score or negative log-likelihood) leads to a (strictly) proper scoring in competing risks settings by re-weighting the observations. \\
Indeed, for any (strictly) proper scoring rule $\ell: \mathbb{R} \times \{0,1\} \rightarrow \mathbb{R}$, we can construct a cause-specific scoring rule function $L_{k, \zeta}:  \mathbb{R} \times \mathcal{D} \rightarrow \mathbb{R}$, which is also a (strictly) proper scoring rule for the $k^{th}$ cause-specific event at the fixed time horizon $\zeta \in \mathbb{R}_+$. It follows that $L_{\zeta}$ is (strictly) proper. 
\begin{definition}[\emph{PSR with re-weighting}]
We define $L_{k,\zeta}$, considering the observations $(\mathbf{x}, t, \delta)$ for an event $k$, as the following scoring rule for the $k^{th}$ CIF:
\begin{multline}
    \forall \zeta, \forall k \in \llbracket1, K\rrbracket, \ell: \mathbb{R} \times \{0,1\}\rightarrow \mathbb{R}, (\mathbf{x}, t, \delta) \sim \mathcal{D} \\
    \mathrm{L}_{k, \zeta}(\hat{F}_k(\zeta| \mathbf{x}), (t, \delta)) \defeq \frac{1}{n} \sum_{i=1}^n 
    \dfrac{
        \mathbb{1}_{t_i \leq \zeta, \delta_i = k} ~~\ell\left(\hat{F}_k(\zeta|\mathbf{x}_i), 1\right)
        }
        {
        \tikzmarknode{censure_ie1}{\highlight{orange}
            {$G^*(t_i|\mathbf{x}_i) $}
            }} \\
        +
        \dfrac{
        \mathbb{1}_{t_i > \zeta} ~~ \ell\left(\hat{F}_k(\zeta|\mathbf{x}_i), 0\right)
        }
        {
        \tikzmarknode{censure_te}{\highlight{cyan}
            {$G^*(\zeta|\mathbf{x}_i) $}
        }} \\
    + 
        \dfrac{
        \mathbb{1}_{t_i \leq \zeta, \delta_i \neq 0, \delta_i \neq k} ~~ \ell\left(  \hat{F}_k(\zeta|\mathbf{x}_i), 0\right)
        }
        {
        \tikzmarknode{censure_ie}{\highlight{orange}
            {$G^*(t_i|\mathbf{x}_i) $}
            }} 
\end{multline}%
\begin{tikzpicture}[overlay,remember picture,>=stealth,nodes={align=left,inner ysep=1pt},<-]
     \path (censure_ie.south) ++ (-10em, 1em) node[anchor=east,color=orange!67] (titlecensi){\text{Probability of remaining at $t_i$}};
     \draw [color=orange!87](censure_ie.west) -- ([xshift=-0.1ex,color=orange]titlecensi.east);
    \path (censure_te.south) ++ (-8em,1em) node[anchor=east,color=cyan!67] (censi){\text{\parbox{25ex}{Probability  of remaining at $\zeta$ \\ \small (1 - probability of  censoring)}}};
     \draw [color=cyan!87](censure_te.west) -- ([xshift=-0.1ex,color=cyan]censi.east);
\end{tikzpicture}
The weights correspond to the Inverse Probability of Censoring Weighting (IPCW), which is used to re-calibrate the observed population to align with the uncensored oracle population \cite{robins_estimation_1994}. This PSR is
an extension of \citet{graf_assessment_1999} and
\citet{schoop_quantifying_2011} when $\ell$ is the Brier Score.
\end{definition} 

\begin{lemma}\label{expectation}
    Considering a proper scoring rule $\ell:\mathbb{R} \times \{0,1\}$, at time horizon $\zeta$ and for any cause-specific risk $k$, the expectation of the scoring rule can be expressed as: 
    \begin{multline}
        \forall \zeta, \forall k \in \llbracket1, K\rrbracket, \ell: \mathbb{R} \times \{0,1\}\rightarrow \mathbb{R}, (\mathbf{X}, T, \Delta) \sim \mathcal{D}, \\
        \mathbb{E}_{T^*, C, \Delta |\mathbf{X} =\mathbf{x}}\left[\mathrm{L}_{k, \zeta}\left(\hat{F}_k(\zeta|\mathbf{x}), (T, \Delta)\right)\right] = 
    \ell\left(\hat{F}_k(\zeta|\mathbf{x}), 1\right)
        F^*_k(\zeta|\mathbf{x}) 
     + \ell\left(\hat{F}_k(\zeta|\mathbf{x}), 0\right)
        \left(1 - F^*_k(\zeta|\mathbf{x})\right)
    \end{multline}
\end{lemma}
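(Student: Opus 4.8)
The plan is to mirror the proof of Lemma~\ref{lem:usefulllemma}, handling each of the three summands defining $L_{k,\zeta}$ separately and then recombining. Fix $\mathbf{x}$ and $\zeta$, and abbreviate the two (now constant) values of the base score as $\ell_1 \defeq \ell(\hat{F}_k(\zeta|\mathbf{x}),1)$ and $\ell_0 \defeq \ell(\hat{F}_k(\zeta|\mathbf{x}),0)$; since these do not depend on $(T,\Delta)$, they factor out of every expectation. The goal is to show that the event-$k$ term contributes $\ell_1 F^*_k(\zeta|\mathbf{x})$ and that the survival term together with the competing-event term contribute $\ell_0\,(1-F^*_k(\zeta|\mathbf{x}))$.

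First, for the term with indicator $\mathbb{1}_{T\le\zeta,\,\Delta=k}$ divided by $G^*(T|\mathbf{x})$ and multiplied by $\ell_1$: this is exactly the $\Psi_{k,\zeta}$ computation of Lemma~\ref{lem:usefulllemma}. Because $\Delta=k\neq 0$ forces the observation to be uncensored, we have $T=T^*$ and $T^*\le C$, so the non-informative censoring Assumption~\ref{info_censoring} ($T^*\indep C\mid\mathbf{X}$) lets the inner integral over $\{c\ge t\}$ produce precisely $G^*(t|\mathbf{x})$, cancelling the IPCW weight and leaving $\ell_1\,F^*_k(\zeta|\mathbf{x})$. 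For the survival term, with indicator $\mathbb{1}_{T>\zeta}$ divided by $G^*(\zeta|\mathbf{x})$ and multiplied by $\ell_0$, I reuse the $\Lambda_{k,\zeta}$ computation: writing $\mathbb{1}_{T>\zeta}=\mathbb{1}_{T^*>\zeta}\,\mathbb{1}_{C>\zeta}$ and using independence, $\mathbb{E}[\mathbb{1}_{C>\zeta}\mid\mathbf{x}]=G^*(\zeta|\mathbf{x})$ cancels the weight, giving $\ell_0\,S^*(\zeta|\mathbf{x})$.

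The genuinely new piece is the competing-event term, with indicator $\mathbb{1}_{T\le\zeta,\,\Delta\neq 0,\,\Delta\neq k}$ divided by $G^*(T|\mathbf{x})$ and multiplied by $\ell_0$. The IPCW cancellation here is the same as for the first term, but now run simultaneously over all competing causes $j\notin\{0,k\}$: the condition $\Delta\neq 0$ again forces $T=T^*$ and $T^*\le C$ (identifying the observed label with $\Delta^*=\argmin_k T^*_k$), so conditioning on $C$ yields $G^*(t|\mathbf{x})$ and cancels the weight, leaving $\ell_0\,\mathbb{P}(T^*\le\zeta,\,\Delta^*\neq k\mid\mathbf{X}=\mathbf{x})$.

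Finally I combine the two $\ell_0$ contributions. Since surviving past $\zeta$ and experiencing some cause $\neq k$ by $\zeta$ are disjoint and, together with cause $k$, exhaust all outcomes, $S^*(\zeta|\mathbf{x})+\mathbb{P}(T^*\le\zeta,\,\Delta^*\neq k\mid\mathbf{x}) = S^*(\zeta|\mathbf{x})+\big(F^*(\zeta|\mathbf{x})-F^*_k(\zeta|\mathbf{x})\big)=1-F^*_k(\zeta|\mathbf{x})$, using $S^*+F^*=1$ from Definition~\ref{quantities_interest}. Adding the three contributions gives $\ell_1 F^*_k(\zeta|\mathbf{x})+\ell_0\,(1-F^*_k(\zeta|\mathbf{x}))$, as claimed. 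I expect the main obstacle to be the third term: carefully justifying that restricting to $\Delta\neq 0$ legitimately replaces the observed $T$ by $T^*$ so that $1/G^*(T)=1/G^*(T^*)$ cancels against the $G^*(t)$ produced by integrating out $C$, while the censored contributions vanish exactly as in the ``$=0$ because $k\neq 0$'' steps of Lemma~\ref{lem:usefulllemma}; the bookkeeping collapsing terms two and three into $1-F^*_k$ is then routine.
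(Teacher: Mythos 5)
Your proposal is correct and follows essentially the same route as the paper's proof: decompose $L_{k,\zeta}$ into the three indicator terms, reuse the IPCW-cancellation computations of Lemma~\ref{lem:usefulllemma} for each (the competing-event term being handled identically to the event-$k$ term since $\Delta\neq 0$ forces $T=T^*\le C$), and then collapse the two $\ell_0$ contributions via $S^*(\zeta|\mathbf{x})+\mathbb{P}(T^*\le\zeta,\Delta\neq k\mid\mathbf{x})=1-F^*_k(\zeta|\mathbf{x})$. The paper states this last identity through a short inclusion--exclusion computation rather than your direct $F^*-F^*_k$ bookkeeping, but the argument is the same.
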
 

\begin{proof}
The computations are essentially the same as in the previous section.
    \begin{multline}
    \forall \zeta, \forall k \in \llbracket1, K\rrbracket, \ell: \mathbb{R} \times \{0,1\}\rightarrow \mathbb{R}, (\mathbf{x}, t, \delta) \sim \mathcal{D} \\
    \mathrm{L}_{k, \zeta}(\hat{F}_k(\zeta| \mathbf{x}), (t, \delta)) \defeq \frac{1}{n} \sum_{i=1}^n 
    \underbrace{
    \dfrac{
        \mathbb{1}_{t_i \leq \zeta, \delta_i = k} ~~\ell\left(\hat{F}_k(\zeta|\mathbf{x}_i), 1\right)
        }
        {G^*(t_i|\mathbf{x}_i)}
    }_{\defeq \Psi_{k, \zeta}(\hat{F}_k(\zeta| \mathbf{x}), ( t, \delta))} \\
        +
    \underbrace{
        \dfrac{
        \mathbb{1}_{t_i > \zeta} ~~ \ell\left(\hat{F}_k(\zeta|\mathbf{x}_i), 0\right)
        }
        {G^*(\zeta|\mathbf{x}_i)} 
    }_{\defeq \Lambda_{k, \zeta}(\hat{F}_k(\zeta| \mathbf{x}), ( t, \delta))} \\
    + 
    \underbrace{
        \dfrac{
        \mathbb{1}_{t_i \leq \zeta, \delta_i \neq 0, \delta_i \neq k} ~~ \ell\left(  \hat{F}_k(\zeta|\mathbf{x}_i), 0\right)
        }
        {G^*(t_i|\mathbf{x}_i)} 
    }_{\defeq \Phi_{k, \zeta}(\hat{F}_k(\zeta| \mathbf{x}), ( t, \delta))} 
\end{multline}
\begin{align}
    \mathbb{E}_{T^*, C, \Delta |\mathbf{X} =\mathbf{x}}\left[\Psi_{k, \zeta}(\hat{F}_k(\zeta| \mathbf{x}), ( T, \Delta))|\mathbf{X}=\mathbf{x}\right] &= 
    \mathbb{E}_{T^*, C, \Delta |\mathbf{X} =\mathbf{x}}\left[ 
                    \mathbb{1}_{T \leq \zeta} \mathbb{1}_{\Delta = k}  
                    \dfrac{\ell\left(\hat{F}_k(\zeta|\mathbf{x}), 1\right)}{G^*(T|\mathbf{x})}  
                    \right] \\
    &=\ell\left(\hat{F}_k(\zeta|\mathbf{x}), 1\right) ~\mathbb{E}_{T^*, C, \Delta |\mathbf{X} =\mathbf{x}}\left[
                    \dfrac{
                    \mathbb{1}_{T^* \leq \zeta} \mathbb{1}_{T^* \leq C} 
                    \mathbb{1}_{\Delta = k} 
                    }
                    {
                    G^*(T|\mathbf{x})
                    }
                    \right] \\
    &= \ell\left(\hat{F}_k(\zeta|\mathbf{x}), 1\right)
        \mathbb{P}(T^*\leq \zeta, \Delta = k | \mathrm{X}= \mathrm{x}) \\
\end{align}

\begin{align}
    \mathbb{E}_{T^*, C, \Delta |\mathbf{X} =\mathbf{x}}\left[\Phi_{k, \zeta}\left(\hat{F}_k(\zeta| \mathbf{x}), ( T, \Delta)\right)\right] &= 
    \mathbb{E}_{T^*, C, \Delta |\mathbf{X} =\mathbf{x}}\left[ 
                    \mathbb{1}_{T\leq \zeta, \Delta \neq 0, \Delta \neq k}
                    \dfrac{\ell\left(\hat{F}_k(\zeta|\mathbf{x}), 0\right)}{G^*(T|\mathbf{x})}  
                    \right] \\
    &=\ell\left(\hat{F}_k(\zeta|\mathbf{x}), 0\right) ~\mathbb{E}_{T^*, C, \Delta |\mathbf{X} =\mathbf{x}}\left[ 
                    \dfrac{
                    \mathbb{1}_{T^* \leq \zeta} \mathbb{1}_{T^* \leq C} 
                    \mathbb{1}_{\Delta \neq k} 
                    }
                    {
                    G^*(T|\mathbf{x})
                    }
                    \right] \\
    &= \ell\left(\hat{F}_k(\zeta|\mathbf{x}), 0\right)
        \mathbb{P}(T^*\leq \zeta, \Delta \neq k | \mathrm{X}= \mathrm{x})\\
\end{align}

\begin{align}
    \mathbb{E}_{T^*, C, \Delta |\mathbf{X} =\mathbf{x}}\left[\Lambda_{k, \zeta}(\hat{F}_k(\zeta, \mathbf{x}), (T, \Delta))|\mathbf{X}=\mathbf{x}\right] &= 
    \mathbb{E}_{T^*, C, \Delta |\mathbf{X} =\mathbf{x}}\left[
                    \mathbb{1}_{T>\zeta}
                    \dfrac{\ell\left(1 - \hat{F}_k(\zeta|\mathbf{x}), 0\right)}{G^*(\zeta|\mathbf{x})}  
                    \right] \\
        &= \ell\left(\hat{F}_k(\zeta|\mathbf{x}), 0\right) \mathbb{E}_{T^*, C, \Delta |\mathbf{X} =\mathbf{x}}\left[] 
                    \dfrac{
                    \mathbb{1}_{T^*>\zeta}
                    \mathbb{1}_{C>\zeta}
                    }
                    {\mathbb{P}(C>\zeta|\mathbf{x})} 
                    \right]\\
        &= \ell\left(\hat{F}_k(\zeta|\mathbf{x}), 0\right) \mathbb{P}(T^*>\zeta| \mathbf{X} = \mathbf{x})\\
\end{align}

By summing all of the terms, we obtain: 
\begin{equation}
\begin{multlined}
    \mathbb{E}_{T^*, C, \Delta |\mathbf{X} =\mathbf{x}}\left[\mathrm{L}_{k, \zeta}\left(\hat{F}_k(\zeta|\mathbf{x}), ( T, \Delta)\right)\right] = 
    \ell\left(\hat{F}_k(\zeta|\mathbf{x}), 1\right)
        \mathbb{P}(T^*\leq \zeta, \Delta = k) \\ 
     + \ell\left(\hat{F}_k(\zeta|\mathbf{x}), 0\right)
        \left( \mathbb{P}(T^*\leq \zeta, \Delta \neq k | \mathrm{X}= \mathrm{x})+ \mathbb{P}(T^*>\zeta| \mathbf{X} = \mathbf{x})\right)
\end{multlined}
\end{equation}

Meanwhile, 
\begin{flalign}
    \mathbb{P}(\overline{T^* \leq \zeta \cap \Delta = k}) &= \mathbb{P}(T^* > \zeta \cup \Delta \neq k) \\
    &= \mathbb{P}(T^* > \zeta) + \mathbb{P}(\Delta \neq k) - \mathbb{P}(T^* > \zeta \cap \Delta \neq k) \\ 
    &= \mathbb{P}(T^* > \zeta) + \mathbb{P}(\Delta \neq k \cap T^* > \zeta) + \mathbb{P}(\Delta \neq k \cap T^* \leq \zeta) - \mathbb{P}(T^* > \zeta \cap \Delta \neq k) \\ 
    &= \mathbb{P}(T^* > \zeta) + \mathbb{P}(\Delta \neq k \cap T^* \leq \zeta)
\end{flalign}
Therefore, we obtain:
\begin{equation}
    \mathbb{E}_{T^*, C, \Delta |\mathbf{X} =\mathbf{x}}\left[\mathrm{L}_{k, \zeta}\left(\hat{F}_k(\zeta|\mathbf{x}), ( T, \Delta)\right)\right]= 
    \ell\left(\hat{F}_k(\zeta|\mathbf{x}), 1\right)
        F^*_k(\zeta|\mathbf{x})
     + \ell\left(\hat{F}_k(\zeta|\mathbf{x}), 0\right)
        \left(1 - F^*_k(\zeta|\mathbf{x})\right)
\end{equation}
\end{proof}

\begin{proposition}\label{psr}
    If $\ell: \mathbb{R} \times \{0,1\} \rightarrow \mathbb{R}$ is a chosen (strictly) proper scoring rule, then $L_{k, \zeta}:  \mathbb{R} \times \mathcal{D} \rightarrow \mathbb{R}$ is also a (strictly) proper scoring rule for the $k^{th}$ cause-specific event at the fixed time horizon $\zeta \in \mathbb{R}_+$.
\end{proposition}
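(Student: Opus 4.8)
The plan is to leverage Lemma~\ref{expectation}, which already collapses the entire censoring-adjusted expectation into a two-term expression, and then to recognize that this expression is nothing but the expected binary score of $\ell$ against a Bernoulli target whose parameter is the true cause-specific CIF. Once this identification is made, the properness of $L_{k,\zeta}$ is inherited \emph{verbatim} from the assumed properness of $\ell$, so almost no new analytic work is required.

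Concretely, I would first fix $\zeta$, $k$, and a covariate value $\mathbf{x}$, and abbreviate $p^* \defeq F^*_k(\zeta|\mathbf{x})$ and $\hat p \defeq \hat F_k(\zeta|\mathbf{x})$. Lemma~\ref{expectation} then reads
\begin{equation*}
\mathbb{E}_{T^*\!, C, \Delta |\mathbf{X} =\mathbf{x}}\!\left[\mathrm{L}_{k, \zeta}\!\left(\hat p, (T, \Delta)\right)\right] = \ell(\hat p, 1)\, p^* + \ell(\hat p, 0)\,(1-p^*).
\end{equation*}
The right-hand side is exactly the expected score $\mathbb{E}_{B \sim \mathrm{Bernoulli}(p^*)}[\ell(\hat p, B)]$ incurred by the candidate probability $\hat p$ when the binary label $B$ is drawn from the Bernoulli law with success probability $p^*$.

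Next I would instantiate Definition~\ref{def:proper} for the binary rule $\ell$ with $\mathcal{Q} = \mathrm{Bernoulli}(p^*)$ and $\mathcal{P} = \mathrm{Bernoulli}(\hat p)$. Properness of $\ell$ gives
\begin{equation*}
\mathbb{E}_{B \sim \mathrm{Bernoulli}(p^*)}[\ell(\hat p, B)] \le \mathbb{E}_{B \sim \mathrm{Bernoulli}(p^*)}[\ell(p^*, B)],
\end{equation*}
and applying Lemma~\ref{expectation} once more to the right-hand side (now evaluated at $\hat F_k = F^*_k$) turns this into precisely the inequality required in the definition of a PSR for the $k^{th}$ cause-specific event, establishing properness of $L_{k,\zeta}$. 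For the strict case, if $\ell$ is strictly proper the inequality above is an equality if and only if $\mathrm{Bernoulli}(\hat p) = \mathrm{Bernoulli}(p^*)$, i.e. iff $\hat F_k(\zeta|\mathbf{x}) = F^*_k(\zeta|\mathbf{x})$, which is exactly strict properness of $L_{k,\zeta}$. The concluding remark that $L_\zeta$ is (strictly) proper then follows by aggregating these cause-specific guarantees over $k$.

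I do not anticipate a genuine obstacle, since the substantive computation has been front-loaded into Lemma~\ref{expectation}; what remains is a change-of-viewpoint argument identifying the reweighted expectation with a binary Bernoulli risk. The only points requiring care are bookkeeping ones: keeping the direction of the properness inequality consistent with the convention of Definition~\ref{def:proper} (where the oracle \emph{maximizes} the expected score), and reading the strict-properness equality condition, stated for distributions $\mathcal{P}=\mathcal{Q}$, back as the scalar identity $\hat p = p^*$ — legitimate because a Bernoulli law is determined by its mean.
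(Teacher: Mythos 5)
Your proposal is correct and follows essentially the same route as the paper: the paper likewise introduces the binary indicator $Y_{k,\zeta} = \mathbb{1}_{T^*\le\zeta \cap \Delta = k}$, rewrites the expectation from Lemma~\ref{expectation} as $\mathbb{E}_{Y_{k,\zeta}}[\ell(\hat p_{k,\zeta}, Y_{k,\zeta})\,|\,\mathbf{X}=\mathbf{x}]$, and invokes the (strict) properness of $\ell$ on this Bernoulli problem to conclude. Your Bernoulli-law phrasing and the remark that a Bernoulli distribution is determined by its mean are just a slightly more explicit rendering of the same identification.
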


\begin{proof}
    \begin{equation}
\begin{multlined}
    \mathbb{E}_{T^*, C, \Delta |\mathbf{X} =\mathbf{x}}\left[\mathrm{L}_{k, \zeta}\left(\hat{F}_k(\zeta|\mathbf{x}), ( T, \Delta)\right)\right] = 
    \ell\left(\hat{F}_k(\zeta|\mathbf{x}), 1\right)
        \mathbb{P}(T^*\leq \zeta, \Delta = k | \mathbf{X}= \mathbf{x}) \\ 
     + \ell\left(\hat{F}_k(\zeta|\mathbf{x}), 0\right)
        \left( \mathbb{P}(T^*\leq \zeta, \Delta \neq k | \mathbf{X}= \mathbf{x})+ \mathbb{P}(T^*>\zeta| \mathbf{X} = \mathbf{x})\right)
\end{multlined}
\end{equation}

To be more explicit, we define a new random variable $Y$: 
\begin{definition}
     $$\forall \zeta, ~ Y_{k, \zeta} \defeq T^* \leq \zeta \cap \Delta = k$$
\end{definition} 

\begin{equation}
    F^*_k(\zeta |\mathbf{x}) = \mathbb{P}(T^*\leq \zeta, \Delta = k| \mathbf{X} = \mathbf{x}) = \mathbb{P}(Y_{k, \zeta}=1| \mathbf{X} = \mathbf{x})
\end{equation}
$\hat{F}_k(\zeta|\mathbf{x})$ represents the estimated probability that $Y_{k, \zeta}=1$, allowing us to rewrite it as: $\hat{p}_{k, \zeta} \defeq \hat{F}_k(\zeta|\mathbf{x}) \approx \mathbb{P}(Y_{k, \zeta}=1| \mathbf{X} = \mathbf{x})$
Therefore: 
\begin{flalign}
    \mathbb{E}_{T^*, C, \Delta |\mathbf{X} =\mathbf{x}}\left[\mathrm{L}_{k, \zeta}(\hat{F}_k(\zeta |\mathbf{x}), (T^*, C, \Delta))\right] &=\mathbb{E}_{T, \Delta |\mathbf{X} =\mathbf{x}}[\mathrm{L}_{k,\zeta}(\hat{p}_{k, \zeta}, ( T, \Delta))] \\
    &=\ell\left(\hat{p}_{k, \zeta}, 0\right)
        \mathbb{P}(Y_{k, \zeta}=0| \mathbf{X} = \mathbf{x})+
    \ell\left(\hat{p}_{k, \zeta}, 1\right)
         \mathbb{P}(Y_{k, \zeta}=1| \mathbf{X} = \mathbf{x}) \\
    &= \mathbb{E}_{Y_{k, \zeta}}[\ell(\hat{p}_{k, \zeta}, Y_{k, \zeta})| \mathbf{X} = \mathbf{x}] \\
    &\leq \mathbb{E}_{Y_{k, \zeta}}[\ell(p_{k, \zeta}, Y_{k, \zeta})| \mathbf{X} = \mathbf{x}]\\
    &\leq \mathbb{E}_{T^*, C, \Delta |\mathbf{X} =\mathbf{x}}[\mathrm{L}_{k, \zeta}(\mathbb{P}(Y_{k, \zeta}=1), ( T, \Delta))] \\
    &\leq \mathbb{E}[\mathrm{L}_{k, \zeta}(F^*_k(\zeta |\mathrm{x}), ( T, \Delta))]
\end{flalign}

The last inequality holds because $l$ is a proper scoring rule.  Similarly, the same computation leads to a strictly proper scoring rule if $l$ is strictly proper. \\

Thus, we conclude that $\forall \zeta, \forall k \in \llbracket1, K\rrbracket, ~ \mathrm{L}_{k, \zeta}(\hat{F}_k(\zeta |\mathrm{x}), ( T, \Delta))$ is a proper scoring rule of $F_k^*(\zeta | \mathbf{x})$. \\
\end{proof}

\begin{theorem}\label{gpsr}
    If $\ell: \mathbb{R} \times \{0,1\} \rightarrow \mathbb{R}$, a chosen (strictly) proper scoring rule, then $L_{\zeta}:  \mathbb{R} \times \mathcal{D} \rightarrow \mathbb{R}$ is a (strictly) proper scoring rule for the global CIF at a fixed time horizon $\zeta \in \mathbb{R}_+$.
\end{theorem}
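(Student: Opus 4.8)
The plan is to realize the global scoring rule as the aggregate of the cause-specific rules already treated in Proposition \ref{psr}, and then transfer properness term by term. Concretely, I would take
$$L_\zeta\big((\hat F_1, \ldots, \hat F_K, \hat S), (t,\delta)\big) = \sum_{k=1}^K L_{k,\zeta}\big(\hat F_k, (t,\delta)\big),$$
where each $L_{k,\zeta}$ is the re-weighted binary rule of the preceding definition and (when the estimands are not assumed normalized) an analogous survival term obtained by treating the outcome ``$T^*>\zeta$'' as the positive class of a binary rule built from $\ell$.

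First I would take the conditional expectation and use linearity together with Lemma \ref{expectation}, giving
$$\mathbb{E}_{T^*\!,C,\Delta\mid \mathbf{X}=\mathbf{x}}\big[L_\zeta\big] = \sum_{k=1}^K \Big( \ell(\hat F_k,1)\,F_k^* + \ell(\hat F_k,0)\,(1-F_k^*)\Big).$$
The crucial structural observation is that the $k$-th summand depends on the estimand only through $\hat F_k$. Hence maximizing the sum decouples into $K$ independent one-dimensional problems, each of which is exactly the expected binary score $\mathbb{E}_{Y_{k,\zeta}}[\ell(\hat F_k, Y_{k,\zeta})]$ appearing in Proposition \ref{psr}.

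Next I would invoke Proposition \ref{psr}: each summand is bounded above by its value at $\hat F_k = F_k^*$, so summing the $K$ inequalities shows that the expected score at any estimand is at most the expected score at the oracle tuple $(F_1^*, \ldots, F_K^*, S^*)$, which is exactly the properness requirement of Definition \ref{def:psrcr}. For the strict case, I would use the elementary fact that if $a_k \le b_k$ for every $k$ and $\sum_k a_k = \sum_k b_k$, then $a_k = b_k$ for all $k$; combined with the strict properness of each $L_{k,\zeta}$ this forces $\hat F_k = F_k^*$ simultaneously for all $k$, and the survival coordinate is then pinned down either by the normalization constraint $\hat S = 1 - \sum_k \hat F_k = S^*$ or directly by the symmetric survival term.

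The main obstacle is not the inequality itself but making the decoupling argument airtight for strict properness, and in particular deciding how the survival coordinate $\hat S$ is constrained: summing only the event rules leaves $\hat S$ free unless the predictions are required to sum to one, so I must either impose that normalization (matching the construction of \textsc{SurvivalBoost}) or add the symmetric survival term and note that Lemma \ref{expectation} applies to it verbatim. Once that bookkeeping is fixed, properness of the global rule is an immediate consequence of the per-event result.
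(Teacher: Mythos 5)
Your proposal is correct and follows essentially the same route as the paper, whose own proof of this theorem is a single sentence deferring to Proposition \ref{psr} and Lemma \ref{expectation}: decompose $L_\zeta$ as $\sum_{k=1}^K L_{k,\zeta}$, take conditional expectations termwise, and sum the per-event properness inequalities. Your extra care about the strict case (termwise equality in a sum of inequalities) and about how the survival coordinate $\hat S$ is pinned down is detail the paper omits entirely, and the normalization $\hat S = 1 - \sum_{k=1}^K \hat F_k$ is indeed the reading consistent with the paper's construction of \textsc{SurvivalBoost}.
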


\begin{proof}
    This follows straightforwardly from the proposition and the lemma above.
\end{proof}

\paragraph{Corollary: Proper global scoring rule to compare competing risk models}
The defined scoring rule $\sum_{k=1}^K\mathrm{L}_{k, \zeta}$ is proper on any arbitrarily chosen time horizon $\zeta$. To compare
different models, a global measure is necessary, such as summing
over time, as introduced by \citet{graf_assessment_1999}. Here, we extend the Integrated Brier Score to other (strictly) proper scoring rules $l$ and prove that the Integrated Loss (IL) is also a (strictly) proper scoring rule. \\
By considering:
$$Z \sim \mathcal{U}(0, t_{max})$$ with $t_{max}$ being the maximum time horizon for prediction.

\begin{definition}[\emph{Integrated global PSR}]
    With $\ell: \mathbb{R} \times \{0,1\} \rightarrow \mathbb{R}$, a chosen scoring rule, the cause-specific scoring rule function $L_{k, \zeta}:  \mathbb{R} \times \mathcal{D} \rightarrow \mathbb{R}$  defined as above, we define the $\mathrm{IL}$ as
    \begin{flalign}
        \mathrm{IL}(\hat{F}_1(.| \mathbf{x}), ..., \hat{F}_K(.| \mathbf{x}), ( T, \Delta))
        &\defeq \mathbb{E}_{Z}\left[\sum_{k=1}^K \mathrm{L}_{k, Z}(\hat{F}_{k}(Z| \mathbf{x}), ( T, \Delta))| \mathbf{X} = \mathbf{x}\right] \\
        &= \sum_{k=1}^K \underbrace{
        \mathbb{E}_{Z}\left[\mathrm{L}_{k, Z}(\hat{F}_k(Z| \mathbf{x}), ( T, \Delta))| \mathbf{X} = \mathbf{x}\right]
        }_{\defeq \mathrm{IL}_k(\hat{F}_{k}(.| \mathbf{x}), ( T, \Delta))}
    \end{flalign}
\end{definition}

\begin{corollary}\label{ipsr}
    With $\ell: \mathbb{R} \times \{0,1\} \rightarrow \mathbb{R}$, a chosen (strictly) proper scoring rule, the cause-specific loss function $L_{k, \zeta}:  \mathbb{R} \times \mathcal{D} \rightarrow \mathbb{R}$ defined above $\mathrm{IL}$ is a (strictly) proper scoring rule.
\end{corollary}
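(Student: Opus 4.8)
The plan is to derive the corollary as a direct consequence of Lemma~\ref{expectation}, reducing the (strict) properness of the IPCW-reweighted cause-specific score $L_{k,\zeta}$ to the assumed (strict) properness of the base binary scoring rule $\ell$. I would fix the horizon $\zeta$, the event index $k\in\llbracket1,K\rrbracket$, and the covariate value $\mathbf{x}$, and abbreviate the candidate probability by $\hat{p}_{k,\zeta}\defeq\hat{F}_k(\zeta|\mathbf{x})$ and the oracle one by $p^*_{k,\zeta}\defeq F^*_k(\zeta|\mathbf{x})$.

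First I would apply Lemma~\ref{expectation}, which expresses the expected reweighted score over the observed, censored distribution as
\[
\mathbb{E}_{T^*\!,C,\Delta\mid\mathbf{X}=\mathbf{x}}\!\left[L_{k,\zeta}(\hat{p}_{k,\zeta},(T,\Delta))\right]
= \ell(\hat{p}_{k,\zeta},1)\,p^*_{k,\zeta} + \ell(\hat{p}_{k,\zeta},0)\,(1-p^*_{k,\zeta}).
\]
The key observation is that this right-hand side is exactly the expected base score $\mathbb{E}_{Y_{k,\zeta}}[\ell(\hat{p}_{k,\zeta},Y_{k,\zeta})]$ for the Bernoulli variable $Y_{k,\zeta}\defeq\mathbb{1}_{T^*\leq\zeta,\,\Delta=k}$, whose success probability is $\mathbb{P}(Y_{k,\zeta}=1\mid\mathbf{X}=\mathbf{x})=p^*_{k,\zeta}$. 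Thus the reweighting has turned the censored evaluation problem into an ordinary single-variable binary one.

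Second I would invoke the (strict) properness of $\ell$ on this binary problem: by Definition~\ref{def:proper}, the map $q\mapsto\mathbb{E}_{Y_{k,\zeta}}[\ell(q,Y_{k,\zeta})]$ attains its maximum at $q=p^*_{k,\zeta}$, and uniquely so when $\ell$ is strictly proper. Substituting back through the identity above yields
\[
\mathbb{E}\!\left[L_{k,\zeta}(\hat{p}_{k,\zeta},(T,\Delta))\right]\leq\mathbb{E}\!\left[L_{k,\zeta}(p^*_{k,\zeta},(T,\Delta))\right],
\]
with equality if and only if $\hat{p}_{k,\zeta}=p^*_{k,\zeta}$ in the strict case. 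Matching this with the cause-specific properness criterion establishes that $L_{k,\zeta}$ is a (strictly) proper scoring rule for $F^*_k(\zeta|\mathbf{x})$ at the fixed horizon $\zeta$; summing over $k$ and integrating the resulting inequality over $Z\sim\mathcal{U}(0,t_{max})$ then preserves it, so any aggregate built from these scores inherits properness by linearity of expectation.

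The main obstacle is not the concluding inequality, which is immediate once the reduction is in place, but rather the identification of the reweighted expectation with a genuine Bernoulli expectation --- and this is precisely what Lemma~\ref{expectation} supplies. Underneath that lemma, Assumption~\ref{info_censoring} (non-informative censoring) together with the use of the oracle weights $G^*$ are what make the IPCW correction convert expectations over the censored pair $(T,\Delta)$ into clean expectations over the uncensored indicator $Y_{k,\zeta}$. I would also take care that strictness transfers faithfully: because the reduction is an exact equality, valid for every candidate $\hat{p}_{k,\zeta}$ rather than a one-sided bound, the ``only if'' direction of strict properness survives the substitution intact.
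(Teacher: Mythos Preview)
Your proposal is correct and follows essentially the same route as the paper: reduce the expected IPCW-reweighted score to a Bernoulli expectation via Lemma~\ref{expectation}, invoke the (strict) properness of $\ell$ to get properness of $L_{k,\zeta}$, and then pass to $\mathrm{IL}$ by summing over $k$ and integrating in $Z$ using monotonicity/linearity of expectation. The only cosmetic difference is that the paper packages your first two steps as the already-proved Proposition~\ref{psr} and simply cites it, whereas you re-derive that proposition inline before integrating.
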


\begin{proof}
We have already proven that $L_{k, \zeta}:  \mathbb{R} \times \mathcal{D} \rightarrow \mathbb{R}$ is a (strictly) proper scoring rule. Given the monotonicity and positivity of the expectation, the result follows immediately. 

\begin{align}
    \mathbb{E}_{T^*, C, \Delta | \mathbf{X}= \mathbf{x}, Z=\zeta} \left[\mathrm{IL}_k(\hat{F}_k(\zeta| \mathbf{x})), ( T, \Delta) \right] &= \mathbb{E}_{T^*, C, \Delta | \mathbf{X}= \mathbf{x}, Z=\zeta} \left[\mathrm{L}_k(\hat{F}_k(\zeta| \mathbf{x}), ( T, \Delta))\right] \\
    &\leq \mathbb{E}_{T^*, C, \Delta | \mathbf{X}= \mathbf{x}, Z=\zeta} \left[\mathrm{L}_k(F^*_k(\zeta| \mathbf{x}), ( T, \Delta))\right] \\
    &\leq \mathbb{E}_{T^*, C, \Delta | \mathbf{X}= \mathbf{x}, Z=\zeta} \left[\mathrm{IL}_k(F^*_k(\zeta| \mathbf{x}), ( T, \Delta))\right]
\end{align}
And since the expectation is non-decreasing, we have:
\begin{align}
    \mathbb{E}_{T^*, C, \Delta} \left[\mathrm{IL}_k(\hat{F}_k(Z| \mathbf{x}), (T, \Delta))| \mathbf{X}= \mathbf{x}\right] 
    &\leq \mathbb{E}_{T^*, C, \Delta} \left[\mathrm{IL}_k(F^*_k(Z| \mathbf{x}), (T, \Delta))| \mathbf{X}= \mathbf{x}\right]
\end{align}
This allows us to consider the Integrated Loss (IL) as a global proper scoring rule for comparing different competing risks models. 
\end{proof}

\section{Examples}
\subsection{Brier Score}
When we define $l(y, \hat{y}) \defeq (y - \hat{y})^2$, we obtain the censoring-adjusted Brier score for the $k^{th}$ competing event, as defined in equation 14 of \cite{kretowska_tree-based_2018}:

\begin{definition}
\begin{multline}
    \forall \zeta, \forall k \in [1, K], \\
    \mathrm{BS}_k(\hat{F}_k(\zeta, \mathbf{x}), \delta, t, \zeta, \mathbf{x}) \defeq \frac{1}{n} \sum_{i=1}^n 
    \dfrac{
        \mathbb{1}_{t_i \leq \zeta, \delta_i = k}\left(1 - \hat{F}_k(\zeta|\mathbf{x}_i)\right)^2
        }
        {G^*(t_i|\mathbf{x}_i)}
        +
        \dfrac{
        \mathbb{1}_{t_i > \zeta} \left(\hat{F}_k(\zeta|\mathbf{x}_i)\right)^2
        }
        {G^*(\zeta|\mathbf{x}_i)} 
     \\
    + 
        \dfrac{
        \mathbb{1}_{t_i \leq \zeta, \delta_i \neq 0, \delta_i \neq k} \left(  \hat{F}_k(\zeta|\mathbf{x}_i)\right)^2
        }
        {G^*(t_i|\mathbf{x}_i)}
\end{multline}
\end{definition}

\subsection{Binary cross entropy loss} 
As explained by \citet{benedetti_scoring_2010}, the log loss captures uncertainty better than the mean squared error. Therefore, one could also evaluate survival analysis and competing risks models using the following loss.

{\footnotesize
\begin{multline}
    \forall k \in [1, K], \\
    \mathrm{l}_k(\hat{F}_k(\zeta, \mathbf{x}), \delta, t, \zeta)\defeq \frac{1}{n} \sum_{i=1}^n 
    \dfrac{
        \mathbb{1}_{t_i \leq \zeta, \delta_i = k}\log\left( \hat{F}_k(\zeta|\mathbf{x}_i)\right)
        }
        {G^*(t_i|\mathbf{x}_i)}
        + 
        \dfrac{
        \mathbb{1}_{t_i \leq \zeta, \delta_i \neq 0, \delta_i \neq k} \log\left(1 -  \hat{F}_k(\zeta|\mathbf{x}_i)\right)
        }
        {G^*(t_i|\mathbf{x}_i)} \\ 
        + 
        \dfrac{
        \mathbb{1}_{t_i > \zeta} \log\left(1 - \hat{F}_k(\zeta|\mathbf{x}_i)\right)
        }
        {G^*(\zeta|\mathbf{x}_i)}
\end{multline}
}

\section{The \citet{yanagisawa2023proper} scoring rule for survival \label{sec:cen_log_simple}} 

\citet{yanagisawa2023proper} introduce a metric called $S_{Cen-log-simple}$, which is an approximation of the proper scoring metric in \citet{rindt_survival_2022}.
The metric in \citet{rindt_survival_2022} requires the hazard function, which is the time derivative of the cumulative incidence function. This derivative can only be computed by differentiable models, implying an implicit assumption on almost-everywhere smooth time dependence. To avoid the need for the hazard function, \citet{yanagisawa2023proper} approximate it as piecewise affine. They demonstrate that under the assumption that the ``node time points'' —the edges of the affine segments— match an actual piecewise-affine breakdown of the CIF, the resulting approximation is proper. They argue that with enough node time points, this metric serves as a good approximation of a proper scoring rule.

$S_{Cen-log-simple}$ is defined as: 
\begin{multline}
    S_{Cen-log-simple}( \hat{F} , (t, \delta ); \{\zeta_i\}^B_{i=0}) \defeq \\
    -\delta \sum^{B-1}_{i=0} \mathbb{1}_{\zeta_i < t \leq \zeta_{i+1}} \log(\hat{F}(\zeta_{i+1}) - \hat{F}(\zeta_i)) \\
    - (1 - \delta) \sum_{i=0}^{B- 1} \mathbb{1}_{\zeta_i < t \leq \zeta_{i+1}} \log(1 - \hat{F}(\zeta_{i+1}))
\end{multline}
where $B$ is the number of node time points\footnote{We use $B=32$, as in the experiments in \citet{yanagisawa2023proper}}, and $\{\zeta_i\}^B_{i=0}$ are the node times points, evenly spaced between $0$ and $t_{max}$, dividing the time space into $B$ equal intervals.

\section{Pseudo-code}
\begin{algorithm}[h!]
       \caption{IPCW Computer}
       \label{alg:ipcw_computer}
    \begin{algorithmic}
       \STATE {\bfseries Input:} $\mathbf{x}, \delta, t, \hat{G}$
       \STATE $y \gets \delta ~ \mathbb{1}_{t \leq \zeta}$ \COMMENT{Computing the target}
       \IF[The observation is not censored]{$t > \zeta$}
            \STATE $w \gets \frac{1}{\hat{G}(\zeta|\mathbf{x})}$
        \ELSIF{$t \leq \zeta$ and $\delta \neq 0$}
            \STATE $w \gets \frac{1}{\hat{G}(t|\mathbf{x})}$
        \ELSE
            \STATE $w \gets 0$
        \ENDIF
    \RETURN $y, w$
    \end{algorithmic}
\end{algorithm}

\begin{algorithm}[h!]
       \caption{Censoring Feedback Loop - One Iteration}
       \label{alg:censo_one_iter}
    \begin{algorithmic}
       \STATE {\bfseries Input:} $\mathbf{x}, \delta, t, \hat{S}$
       \FOR{$i=1$ {\bfseries to} $n_{samples}$}
       \STATE $\zeta_i \sim \mathcal{U}(0, t_{max})$ 
       \ENDFOR
       \STATE $\zeta \gets (\zeta_i)_{1 \leq i \leq n_{samples}}$
       \STATE $\Tilde{\mathbf{x}} \gets (\mathbf{x}, \zeta)$
       \STATE $\delta \gets \mathbb{1}_{y = 0} $ \COMMENT{Changing the target (focusing on the censoring distribution)}
       \STATE $y, w \gets \text{ipcwcomputer}(\mathbf{x}, \delta, t, \hat{S})$ 
       \COMMENT{See Alg \ref{alg:ipcw_computer}}
       \STATE $L \gets \frac{1}{n} \sum_{i=1}^n \left(
        y_i ~w_i~\log\left(1 - \hat{G}_k(\zeta_i|\mathbf{x}_i)\right)
        \right)
        +
        (1 - y_i) ~w_i \log\left(\hat{G}(\zeta_i|\mathbf{x}_i)\right)$
        \STATE $\tilde{h}_m(\mathbf{\tilde{x}}) \gets$ Train one iteration of Gradient Boost with $L$ as the loss \COMMENT{$\tilde{h}_m$ is the $m^{th}$ weak learner}
        \STATE $\tilde{H}_m(\zeta | \mathbf{x}) \gets \tilde{h}_{m}(\zeta | \mathbf{x}) + \nu \tilde{H}_{m-1}(\zeta | \mathbf{x})$ \COMMENT{$\tilde{H}_m$ is the $m^{th}$ estimator}
       \STATE $((1 - \hat{G})(\zeta |\mathbf{X}= \mathbf{x}), \hat{G}(\zeta |\mathbf{X}= \mathbf{x})) \gets \tilde{H}_m(\mathbf{\tilde{x}})$ 
       \RETURN $\hat{G}(\zeta |\mathbf{X}= \mathbf{x})$
    \end{algorithmic}
\end{algorithm}

\section{Additional results for competing risk experiments}

\subsection{Results in the survival analysis setting}
\subsubsection{KKBOX}
Here, we present the results of the experiments conducted on the KKBOX dataset (Figures \ref{fig:cenlog-kkbox} and \ref{fig:tradeoff-ibs-survival}).  We highlight the trade-offs observed to assess the scalability of the models. Specifically, the models were trained on KKBOX using subsamples of 100k, 1M, and 2M training data points. However, due to computational constraints, it was not possible to run some experiments with 1M or 2M data points.

\begin{figure}[h!]
    \centering
    \includegraphics[width=0.7\linewidth]{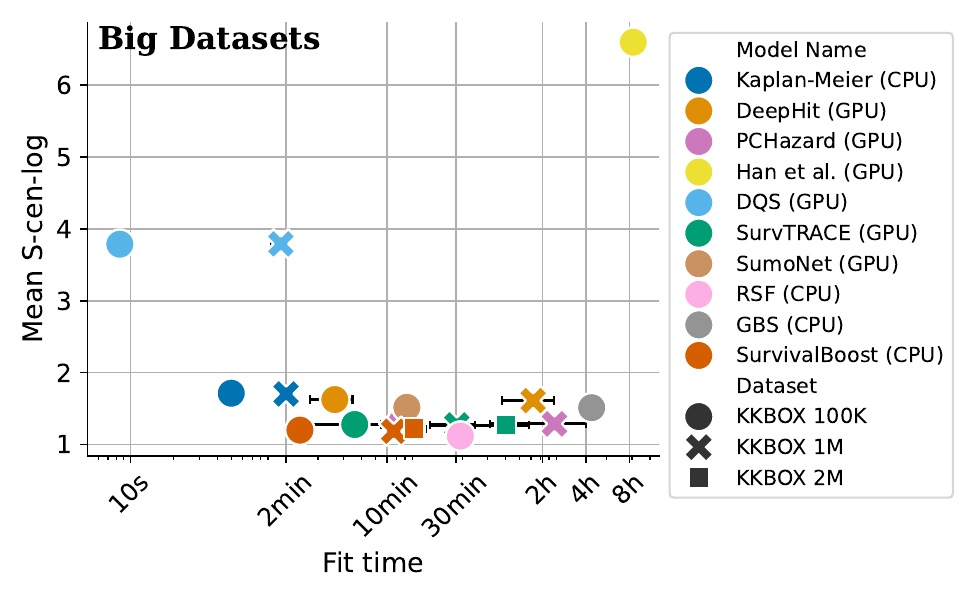}%
    \caption{\textbf{Trade-off between $S_{C-l-s}$ and fitting time for different sample sizes on the KKBOX dataset}}
    \label{fig:cenlog-kkbox}
\end{figure}

\subsection{Trade-off between training time and performances}\label{tradeoffyana}
Here, we provide the results of our analysis of training time with the performances on the $S_{Cen-log-simple}$ of the different models for the survival analysis.

\begin{figure}[h!]
    \centering
    \includegraphics[width=0.7\linewidth]{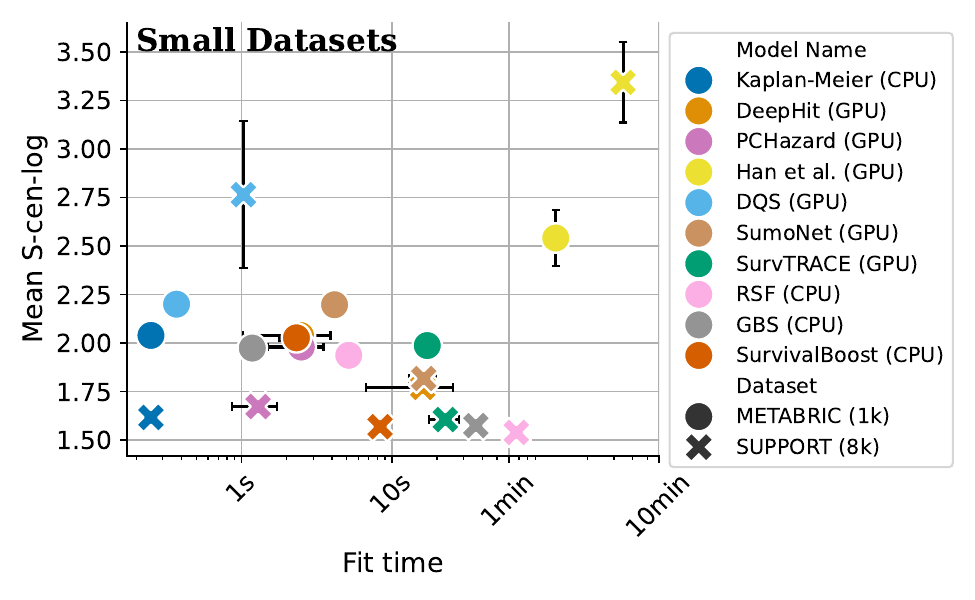}
    \caption{Trade-off between performance and the training time for the $S_{Cen-log-simple}$ metric for the survival model on METABRIC and SUPPORT datasets.}
    \label{fig:fit_predicttime}
\end{figure}

\subsection{Results for the SEER Dataset}%
\label{app:seer_results}

\paragraph{Learning curves}

We conducted experiments while varying the number of training points,  measuring the KM-adjusted Integrated Brier Score (IBS) for each event. Additionally, we averaged the scores to obtain a global metric. The IBS was computed for each event while training on the full dataset, except for Random Survival Forests, which was trained on 100k data points, and Fine and Gray, which was on 10k data points due to computational limitations. In Table \ref{tab:ibs_event_seer}, we compare our method with other models, showing that \textsc{SurvivalBoost} outperforms the alternatives. Furthermore, figure \ref{fig:tradeoff_competing} illustrates that the models with the best average IBS are also the fastest to train.

\begin{table}[h!]
\begin{center}
\caption{Integrated Brier Score for each cause-specific risk on the SEER Dataset (Lower is better). 
\label{tab:ibs_event_seer}}
\small\sc
\begin{tabular}{l|rrr}
\toprule
Event & 1 & 2 & 3 \\
\midrule
Aalen-Johansen & 0.1209 & 0.2832 & 0.0834\\
Fine \& Gray & 0.1055 & \underline{0.0281} & 0.0822 \\
Random Survival Forests& \textbf{0.0825}& 0.0295&0.0803 \\
DeepHit& 0.0931& 0.0330&0.0831 \\
DSM & 0.0875 & 0.0310 & 0.0869 \\
DeSurv & 0.0975 & 0.0327 & 0.0869 \\
SurvTRACE & 0.0871 & 0.0287 & \underline{0.0800} \\
\textsc{SurvivalBoost} & \underline{0.0832} & \textbf{0.0273} & \textbf{0.0757} \\
\bottomrule
\end{tabular}
\end{center}
\end{table}

\paragraph{$C_{\zeta}$-index}

The $C$-index measures whether the ranking of the risk for different
samples aligns with the order of the times when the
event of interest occurs\citep{harrell}. While it
was originally developed as a metric for survival analysis, it is often adapted to competing risks settings, where it is applied independently to each event \citep{uno_cstatistics_2011}.
However, in such settings, the C-index is biased and does not account for the probabilities of the events. Nonetheless, due to its popularity, we have included it in our experiments.

The tables below present the $C_\zeta$-index over time for the three
events \ref{tab:results_seer}. 
 At a fixed time horizon $\zeta$, we compute the $C_\zeta$-index for each class, which corresponds to the ROC-AUC, accounting for censored observations. The time horizons $\zeta$ are selected based on the any-event distribution, representing quantiles. For instance, at the time corresponding to 0.25, 25\% of the events have already occurred. 
These results differ from those in the SurvTRACE paper \citep{wang_survtrace_2022} for two main reasons: \emph{1)} The
available code online only implements one of their loss functions, \emph{2)} they treated the SEER dataset with two competing risks, classifying any other event as censored, whereas we categorized other events as a third competing risk.

\begin{table}[h!]
\begin{center}
\caption{C-index for competing risks on the SEER Dataset (Higher is better)
\label{tab:results_seer}}
\begin{small}
\adjustbox{max width =\columnwidth}{
\begin{tabular}{l||ccc||ccc||ccc}
\toprule
Time-horizon & & 0.25 & & & 0.50 & & & 0.75 & \\
quantile&&&&&&&&\\
\midrule
Event & 1 & 2 & 3 & 1 & 2 & 3 & 1 &2 &3 \\
\midrule
Aalen Johansen & .5±.0 & .5.±.0 & .5.±.0 & .5.±.0 & .5.±.0 & .5.±.0 & .5.±.0 & .5.±.0 & .5.±.0 \\
Fine \& Gray & .79.±.01 & .67.±.01 & .67.±.02 & .76.±.01 & .66.±.02 & .67.±.01 & .74.±.01 & .66.±.01 & .69.±.01 \\
DeepHit & .86.±.01 & .72.±.02 & .73.±.01 & .83.±.0 & .70.±.02 & .70.±.01 & .81.±.01 & .68.±.02 & .69.±.02 \\
DSM & \underline{.87.±.01} & \textbf{.76.±.01} & \underline{.74.±.01} & \underline{.84.±.01} & \textbf{.73.±.01} & \underline{.72.±.01} & \underline{.82.±.01} & \textbf{.72.±.01} & \textbf{.72.±.01} \\
DeSurv & .82.±.01 & .70.±.03 & .70.±.01 & .80.±.01 & .69.±.0 & .70± .01 & .79.±.01 & .68.±.01 & \underline{.71.±.01} \\
SurvTRACE & \textbf{.88.±.01} & \textbf{.76}.±.01 & \textbf{.76.±.01} & \textbf{ .85.±.01} & \textbf{.73.±.01} & \textbf{.73.±.01} & \textbf{.83.±.01} & \underline{.71.±.01} & \textbf{.72.±.01} \\
\bfseries SurvivalBoost & \underline{.87.±.01} & \underline{.75.±.01} & \underline {.74.±.01} & \underline{.84.±.01} & \underline{.72.±.01} & \underline{.72.±.01} & .80.±.01 & .64.±.01 & .62.±.01 \\
\bottomrule
\end{tabular}
}
\end{small}
\end{center}
\end{table}

\section{Additional results for survival experiments}

\subsection{Metrics for the survival analysis}\label{saall}

\begin{table}[h!]
\label{tab:metabric_supp}
\begin{center}
\begin{small}
\begin{sc}
\caption{\textbf{METABRIC}: $S_{Cen-log-simple}$ and C-index}
\begin{tabular}{l|rrrrr}
\toprule
Model Name & $S_{C-l-s}$ (↓) & C-index 0.25 (↑) &  C-index 0.5 (↑) & C-index 0.75 (↑) \\
\midrule
Kaplan-Meier &  2.0393 ± 0.2184 &  0.5000 ± 0.0000 &  0.5000 ± 0.0000 &  0.5000 ± 0.0000 \\             
DeepHit &  2.0391 ± 0.0005 &  0.6559 ± 0.0123 &  0.5918 ± 0.0236 &  0.6036 ± 0.0226 \\
PCHazard &  1.9796 ± 0.0855 &  0.6633 ± 0.0145 &  0.6356 ± 0.0112 &  0.6342 ± 0.0034 \\
Han et al. &  2.6648 ± 0.0356 &  \underline{0.6770 ± 0.0341} &  \textbf{0.6537 ± 0.0318} &  \textbf{0.6407 ± 0.0074} \\
DQS &  2.2002 ± 0.0000 &  0.6554 ± 0.0126 &  0.6215 ± 0.0091 &  0.6275 ± 0.0018 \\
SumoNet &  2.1973 ± 0.0000 &  \textbf{0.6872 ± 0.0230} &  0.6428 ± 0.0107 &  0.6292 ± 0.0084 \\
SurvTRACE &  1.9871 ± 0.0876 &  0.6598 ± 0.0094 &  0.6377 ± 0.0079 &  0.6357 ± 0.0108 \\
RSF &  \underline{1.9371 ± 0.2265} &  0.6736 ± 0.0135 &  0.6398 ± 0.0101 &  0.6335 ± 0.0097 \\
GBS &  \textbf{1.9742 ± 0.4043} &  0.6402 ± 0.0131 &  \underline{0.6399 ± 0.0122} &  \underline{0.6388 ± 0.0101} \\
\bfseries SurvivalBoost &  2.0269 ± 0.1592 &  0.6685 ± 0.0099 &  0.6374 ± 0.0106 &  0.6159 ± 0.0082 \\
\bottomrule
\end{tabular}
\end{sc}
\end{small}
\end{center}
\end{table}

\begin{table}[h!]
    \begin{center}
    \begin{small}
    \begin{sc}
    \caption{\textbf{METABRIC:} metrics.}
    \begin{tabular}{l|rrrr}
    \toprule
    Model Name &  IBS (↓) & MSE (↓) & MAE (↓) & AUC (↑) \\
    \midrule
    Kaplan-Meier &  0.1854 ± 0.0103 &  16007.1 ± 2100.4 &  102.3 ± 2.5 &  0.5000 ± 0.0000 \\
    DeepHit &  0.1707 ± 0.0086 &  16229.1 ± 1645.0 &   98.5 ± 2.3 &  0.6737 ± 0.0256 \\
    PCHazard &   0.1685 ± 0.011 &  15374.2 ± 2134.5 &   93.3 ± 3.2 &  0.6871 ± 0.0173 \\
    Han et al. &  0.1959 ± 0.0036 &  \textbf{13714.0 ± 1349.0} &   95.5 ± 2.4 &  0.6752 ± 0.0113 \\
    DQS &   0.1717 ± 0.018 &  16833.8 ± 1777.9 &   97.3 ± 2.5 &  0.6792 ± 0.0164 \\
    SumoNet &  0.1698 ± 0.0098 &  40239.2 ± 1936.9 &  179.8 ± 3.2 &  0.5000 ± 0.0000 \\
    SurvTRACE &  0.1723 ± 0.0064 &  22733.4 ± 1382.3 &  109.7 ± 4.7 &  0.6962 ± 0.0102 \\
    RSF &  \textbf{0.1651 ± 0.0084} &  15154.4 ± 1445.0 &   94.3 ± 1.2 &  \textbf{0.7023 ± 0.0129} \\
    GBS &  0.1686 ± 0.0107 &  14265.3 ± 2025.0 &   \underline{91.6 ± 3.4} &  0.6896 ± 0.0123 \\
    \bfseries SurvivalBoost &  \underline{0.1679 ± 0.0116} &  \underline{14208.1 ± 1762.8} &   \textbf{91.5 ± 2.7} &  \underline{0.6993 ± 0.0170} \\
    \bottomrule
\end{tabular}
    \label{tab:metabric-othermetrics}
    \end{sc}
\end{small}
\end{center}
\end{table}

\begin{table}[h!]
\label{tab:support_supp}
\begin{center}
\begin{small}
\begin{sc}
\caption{\textbf{SUPPORT}: $S_{Cen-log-simple}$ and C-index}
\begin{tabular}{l|rrrrr}
\toprule
Model Name & $S_{C-l-s}$ (↓) & C-index 0.25 (↑) &  C-index 0.5 (↑) & C-index 0.75 (↑) \\
\midrule
Kaplan-Meier &  1.6169 ± 0.2680 &  0.5000 ± 0.0000 &  0.5000 ± 0.0000 &  0.5000 ± 0.0000 \\
DeepHit &  2.249±.009 &  0.5546 ± 0.0158 &  0.5575 ± 0.0163 &  0.5600 ± 0.0196 \\
PCHazard &  1.6730 ± 0.0040 &  0.6121 ± 0.0052 &  0.6077 ± 0.0047 &  0.6054 ± 0.0044 \\
Han et al. &  3.2227 ± 0.0054 &  0.5920 ± 0.0235 &  0.5740 ± 0.0187 &  0.5713 ± 0.0143 \\
DQS &  2.7641 ± 0.1281 &  0.5741 ± 0.0043 &  0.5682 ± 0.0033 &  0.5645 ± 0.0038 \\
SumoNet &  1.8175 ± 0.0000 &  0.5948 ± 0.0050 &  0.5952 ± 0.0052 &  0.5970 ± 0.0050 \\
SurvTRACE &  1.6061 ± 0.0026 &  0.6101 ± 0.0052 &  0.6099 ± 0.0038 &  0.6073 ± 0.0030 \\
RSF&  1.9421 ± 0.0229 &  \textbf{0.6174 ± 0.0058} &  0.6137 ± 0.0045 &  0.6104 ± 0.0047 \\
GBS &  1.5750 ± 0.0002 &  0.6136 ± 0.0108 &  \underline{0.6140 ± 0.0100} &  \textbf{0.6143 ± 0.0099} \\
\bfseries SurvivalBoost &  \underline{1.5692 ± 0.3413} &  \underline{0.6165 ± 0.0052} &  \textbf{0.6159 ± 0.0044} &  \underline{0.6138 ± 0.0044} \\ 
\bottomrule
\end{tabular}
\end{sc}
\end{small}
\end{center}
\end{table}

\begin{table}[h!]
    \begin{center}
    \begin{small}
    \begin{sc}
    \caption{\textbf{SUPPORT:} metrics.}
    \begin{tabular}{l|rrrr}
    \toprule
    Model Name & IBS (↓) & MSE (↓) & MAE (↓) & AUC (↑)\\
    \midrule
    Kaplan-Meier & 0.2077 ± 0.004 & 1503075.2 ± 34398.0 & 904.4 ± 7.3 & 0.5000 ± 0.0000 \\
    DeepHit & 0.2061 ± 0.0058 & 1416882.2 ± 33011.4 & 898.4 ± 14.0 & 0.6061 ± 0.0321 \\
    PCHazard & 0.1867 ± 0.0036 & 1317674.8 ± 27353.9 & 843.0 ± 8.8 & 0.6578 ± 0.0074 \\
    Han et al. & 0.2539 ± 0.0015 & 1417630.2 ± 40832.6 & 881.5 ± 23.0 & 0.5906 ± 0.0139 \\
    DQS & 0.2025 ± 0.004 & 1499067.0 ± 44660.7 & 876.3 ± 11.1 & 0.5979 ± 0.0029 \\
    SumoNet & 0.1942 ± 0.0056 & 1857007.8 ± 36240.4 & 967.4 ± 8.1 & 0.5000 ± 0.0000 \\
    SurvTRACE & 0.1876 ± 0.0037 & 1294800.3 ± 14983.6 & 849.8 ± 12.5 & 0.6555 ± 0.0070 \\
    RSF & \underline{0.1815 ± 0.0041} & 1347923.5 ± 53819.8 & \underline{842.9 ± 15.2} & \textbf{0.6750 ± 0.0094} \\
    GBS & 0.187 ± 0.0041 & \underline{1292740.7 ± 26527.7} & 847.8 ± 10.0 & 0.6617 ± 0.0128\\
    \bfseries SurvivalBoost & \textbf{0.1814 ± 0.0049} & \textbf{1216995.5 ± 34370.6} & \textbf{827.2 ± 12.2} & \underline{0.6704 ± 0.0086} \\ 
    \bottomrule
\end{tabular}
    \label{tab:support-othermetrics}
    \end{sc}
\end{small}
\end{center}
\end{table}



\begin{table}[h!]
    \begin{center}
    \begin{small}
    \begin{sc}
    \caption{\textbf{KKBOX (100k data points):} metrics.}
    \begin{tabular}{l|rrrr}
    \toprule
 Model Name & IBS (↓) & MSE (↓) & MAE (↓) & AUC (↑) \\
 \midrule
 Kaplan-Meier & 0.2131 ± 0.0007 & 177438.3 ± 2250.0 & 345.3 ± 1.2 & 0.5000 ± 0.0000 \\
DeepHit & 0.1523 ± 0.0007 & 113033.6 ± 593.1 & 245.7 ± 0.5 & 0.9397 ± 0.0052  \\
PCHazard & 0.1095 ± 0.0001 & \underline{100153.0 ± 1925.2} & 213.5 ± 3.2 & \underline{0.9431 ± 0.0046} \\
Han et al. (NLL) & 0.1715 ± 0.0036 & 111820.2 ± 0.0 & 245.6 ± 0.0 & 0.8881 ± 0.0086 \\
DQS & 0.1301 ± 0.0013 & \textbf{93820.2 ± 4140.5} & \textbf{204.9 ± 2.2} & 0.9228 ± 0.0071 \\
SumoNet & 0.1078 ± 0.0 & 224981.4 ± 0.0 & 360.3 ± 0.0 & 0.5000 ± 0.0000 \\
SurvTRACE & 0.1107 ± 0.0006 & 133400.5 ± 1353.9 & 250.0 ± 3.0 & 0.9379 ± 0.0004 \\
RSF & \underline{0.1068 ± 0.0} & 911586.7 ± 0.0 & 423.6 ± 0.0 & \textbf{0.9449 ± 0.0000} \\
GBS & 0.1567 ± 0.0 & 123348.9 ± 0.0 & 254.5 ± 0.0 & 0.8958 ± 0.0000 \\
\bfseries SurvivalBoost & \textbf{0.1052 ± 0.0006} & 101103.9 ± 9688.4 & \underline{207.2 ± 4.3} & 0.9322 ± 0.0006 \\
\bottomrule
\end{tabular}
    \label{tab:kkbox-othermetrics}
    \end{sc}
\end{small}
\end{center}
\end{table}

\begin{table}[h!]
    \begin{center}
    \begin{small}
    \begin{sc}
    \caption{\textbf{Survival Calibration Metrics.} results are marked with \vmark \, if the model is calibrated and - \, otherwise, with the significance level fixed at $p_{value} = 0.05$.}
    \begin{tabular}{l|rrrr|rrrr|rrrr|r}
        \toprule
        \hfill Dataset & \multicolumn{4}{c|}{\textbf{METABRIC}} & \multicolumn{4}{c|}{\textbf{SUPPORT}} & \multicolumn{4}{c|}{\textbf{KKBOX}} & \textbf{Total} \\ 
            & $\textsc{KM}_c$ & $\textsc{X}_c$ & $\textsc{D}_c$ & $\textsc{one}_c$ &  $\textsc{KM}_c$ & $\textsc{X}_c$ & $\textsc{D}_c$ & $\textsc{one}_c$ & $\textsc{KM}_c$ & $\textsc{X}_c$ & $\textsc{D}_c$ & $\textsc{one}_c$ &\textbf{tests} \\
          Model&&&&&&&&&&&&&\textbf{succesfull} \\
          \midrule
          Kaplan-Meier &  \vmark  & \vmark & \vmark & \vmark & \vmark & \vmark &  \vmark & - &\vmark & \vmark & - & \vmark & \bf 10 \\
          DeepHit & \vmark  & \vmark & -  & - & \vmark & - & - & - & \vmark  & \vmark & -  & - & 5 \\
          PCHazard  & \vmark  & \vmark & \vmark & \vmark & \vmark & \vmark & - &- & \vmark & \vmark & - & - & 8\\
          \cite{han2021inverse}  & \vmark  & \vmark & -  & \vmark & \vmark & \vmark & - & - & \vmark & - & - & - & 6\\
          DQS & \vmark  & \vmark & -  & - & \vmark & - & \vmark & - &   \vmark & - & - & - & 5 \\
          SumoNet & - & - & - & - & - & - & - & - & - & - & - & - & 0\\
          SurvTRACE  & -  & \vmark  & \vmark  & \vmark & \vmark & \vmark & - & - & \vmark & \vmark & - & -  & 7\\
          RSF   & \vmark  & \vmark & \vmark  & \vmark & \vmark & \vmark & \vmark & - & \vmark & \vmark & - & -  & \bf 9 \\
          GBS & \vmark & \vmark & \vmark  & \vmark &  \vmark & \vmark & - & - & \vmark & \vmark & - & - & 8\\
          \bfseries \textsc{SurvivalBoost} & \vmark  & \vmark & \vmark  & \vmark & \vmark & \vmark & \vmark &  - & \vmark & \vmark & - & - & \bf 9\\
          \bottomrule
    \end{tabular}
    \label{tab:calibration-metrics}
    \end{sc}
\end{small}
\end{center}
\end{table}

\section{Implementation Details}
\subsection{Computing Infrastructure}\label{infra}
To conduct our experiments, we used an iternal cluster.  \\
The neural network were trained onto a 4x NVIDIA Tesla V100 32GB GPU with 40 CPUs and 252Gb RAM. \\
The others methods that do not need GPUs were trained onto a cluster with 48CPUs and 504Gb RAM. We chose to allow only 50Gb RAM for each model.

\subsection{Reference of used implementations for baselines}

We compare \textsc{SurvivalBoost} with several baselines, outlining their main characteristics and the implementation used in Table~\ref{tab:implem}

\begin{table}[h!]
\centering
\caption{Characteristics of used baselines.}
\begin{tabular}{|l|c|c|p{5cm}|p{3cm}|} \toprule
Name & \makecell{Competing\\ risks} & \makecell{Proper\\loss} & Implementation & Reference \\
\midrule
SurvTRACE & \checkmark & & ours & \citet{wang_survtrace_2022} \\ \midrule
DeepHit & \checkmark & & \rurl{github.com/havakv/pycox} & \citet{lee_deephit_2018}\\\midrule
DSM & \checkmark & & \rurl{autonlab.github.io/DeepSurvivalMachines} & \citet{nagpal2021deep}\\\midrule
DeSurv & \checkmark & & \rurl{github.com/djdanks/DeSurv}& \citet{danks_derivative-based_2022} \\\midrule
\makecell{Random Survival\\Forests} & \checkmark & & \rurl{scikit-survival.readthedocs.io/} for survival, and \rurl{www.randomforestsrc.org/} for competing risks & \citet{ishwaran_random_2008,ishwaran2014random} \\\midrule
Fine \& Gray & \checkmark & & \rurl{cran.r-project.org/package=cmprsk} & \citet{fine_proportional_1999}\\\midrule
Aalen-Johansen & \checkmark & & ours & \citet{aalen_survival_2008} \\\midrule
Han et al. & & & \rurl{github.com/rajesh-lab/Inverse-Weighted-Survival-Games} & \citet{han2021inverse}\\\midrule
PCHazard & & & \rurl{github.com/havakv/pycox} & \citet{kvamme_continuous_2019}\\\midrule
SumoNet & & \checkmark & \rurl{github.com/MrHuff/Sumo-Net} & \citet{rindt_survival_2022} \\\midrule
DQS & & \checkmark & \rurl{ibm.github.io/dqs/} & \citet{yanagisawa2023proper}\\
\bottomrule
\end{tabular}
\label{tab:implem}
\end{table}

\subsection{GridSearch Parameters}
We performed a Randomized Search for these parameters with a budget of 30 iterations. There are no parameters to tune for the Aalen-Johansen and Fine \& Gray models.
\begin{table}[h!]
    \centering
    \caption{Randomized Search Parameters}
    \begin{tabular}{|l|l||c|}
    \toprule
    Estimator & Parameter & Range \\
    \midrule
    \textsc{SurvivalBoost} & Learning Rate & $loguniform(0.01, 0.5)$ \\
    & Nb of iterations & $\llbracket 20, 200 \rrbracket$ \\
    & Maximum Depth & $\llbracket 2, 10 \rrbracket$ \\
    & Nb of times & $\llbracket 1, 5 \rrbracket$ \\
    \midrule
    SurvTRACE & Learning Rate & $loguniform(10^{-5}, 10^{-3})$\\
    & Batch Size & $\{256, 512, 1024\}$\\
    & Hidden parameter & $\{2, 3\}$ \\ 
    \bottomrule
    \end{tabular}
    \label{tab:gridsearch}
\end{table}

\section{Distribution of the competing risks datasets}
\subsection{SEER Distribution of events}
Here, we present the distributions for both competing risks datasets: the SEER Dataset \ref{fig:seer} and the synthetic dataset \ref{fig:distrib_synthe}. Notably, the censoring distribution is non-uniform over time. Figure \ref{fig:distrib_synthe} illustrates an example of the event distribution with censoring, which is dependent on the covariates. The parameters were selected to represent three distinct behaviors. 

\begin{figure}[h]
  \centering
  \begin{minipage}[b]{0.49\textwidth}
    \includegraphics[width=\columnwidth]{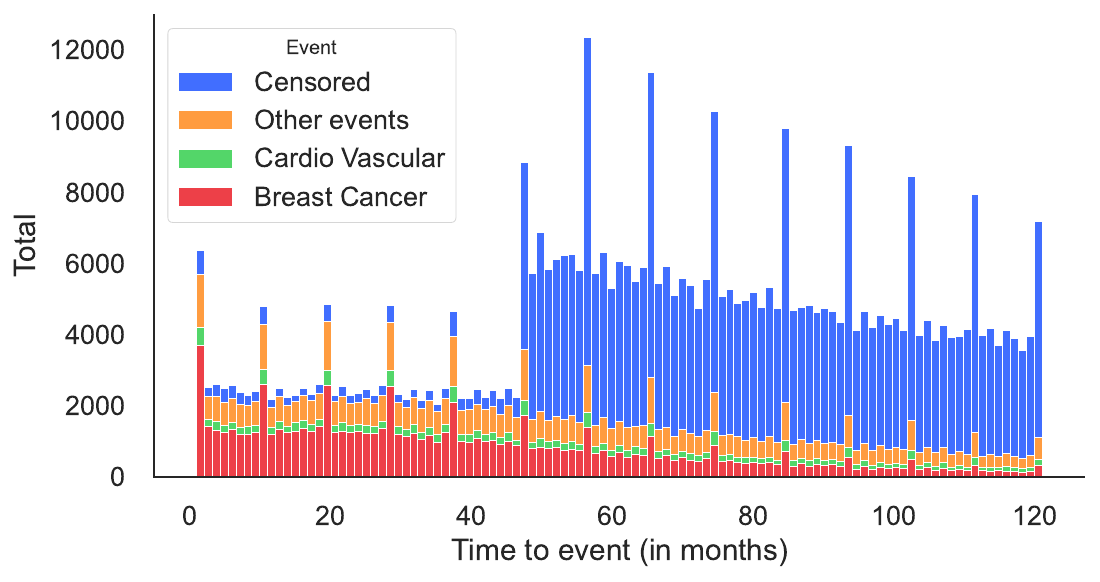}
    \caption{\textbf{SEER Dataset Distributions} The censoring rate is approximately 63\%. The prevalence of events is 18\% for breast cancer, 4.5\% for cardiovascular events, and 10\% for other events. The shift in the censoring distribution after the $48^{th}$ month may be challenging for some methods to learn.}
    \label{fig:seer}
  \end{minipage}
  \hfill
  \begin{minipage}[b]{0.49\textwidth}
    \includegraphics[width=\columnwidth]{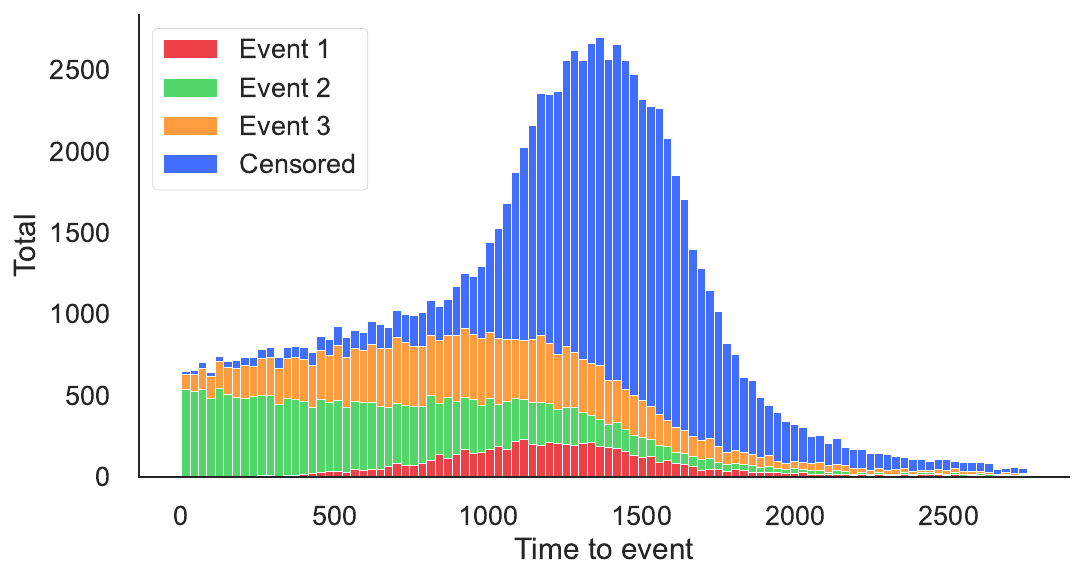}
    \caption{\textbf{Synthetic Dataset Distributions} Duration distributions of the synthetic dataset with censoring dependent on X, and a censoring rate of 69\%. The events are stacked. To illustrate this distribution, consider truck maintenance. Event 1, occurring throughout the duration, corresponds to drivers' driving skills. Event 2 may represent a design flaw in the trucks, occurring from the start. Event 3 refers to trucks' wear and tear over time. }
\label{fig:distrib_synthe}
  \end{minipage}
\end{figure}

\end{document}